\documentclass[11pt]{article}
\usepackage{graphicx} % Required for inserting images

\title{Multigroup Fairness and Omniprediction: \\ Separations and Equivalences}

\author{S\'ilvia Casacuberta\thanks{Supported by a Knight-Hennessy Scholarship, the Simons Foundation Collaboration on the Theory of Algorithmic Fairness, and the Simons Foundation investigators award 17351.}\\
Stanford University
\and
Parikshit Gopalan\\
Apple
\and Varun Kanade\\
University of Oxford
\and
Omer Reingold\thanks{Supported by the Simons Foundation Collaboration on the Theory of Algorithmic Fairness, and the Simons Foundation investigators award 17351.}\\
Stanford University
\and Konstantinos Stavropoulos\thanks{This work was done in part while the author was an intern at Apple and while at The University of Texas at Austin. The author was supported in part by the NSF AI Institute for Foundations of Machine Learning (IFML) and, during the 2026–27 academic year, by the Institute for Advanced Study and an anonymous donor.}\\
Institute for Advanced Study
\and  
Pranay Tankala\thanks{Work done during an internship at Apple.}\\
Harvard University}

\usepackage[utf8]{inputenc}
\usepackage{amsmath, amssymb, amsthm, bbm, graphicx, url}
\usepackage{amsfonts,fullpage}
\usepackage{float}
\usepackage{graphicx}

\usepackage{bbm}
\usepackage{xcolor}
\newif\ifshownotes
\shownotesfalse   % hide notes
\usepackage{natbib}
\usepackage[boxruled,linesnumbered,vlined]{algorithm2e}
\usepackage[pagebackref]{hyperref}
\hypersetup{
    colorlinks=true,
    linktocpage=true,
    linkcolor=blue!50!black,  % Darker blue for internal links
    citecolor=blue!50!black,  % Darker blue for citations
    urlcolor=blue!50!black    % Darker blue for external links
}
\usepackage[capitalize]{cleveref}
\usepackage{dsfont}
\usepackage{xspace}
\usepackage{comment}
\usepackage{tikz}

\newcommand{\eat}[1]{}

\usepackage{thmtools}
\usepackage{thm-restate}

\newtheorem*{definition*}{Definition}
\newtheorem*{proposition*}{Proposition}
\newtheorem*{corollary*}{Corollary}

\newtheorem{theorem}{Theorem}[section]
\newtheorem{lemma}[theorem]{Lemma}
\newtheorem{definition}[theorem]{Definition}

\newtheorem{corollary}[theorem]{Corollary}

\newtheorem{remark}[theorem]{Remark}

\newcommand{\wh}{\widehat}

\newcommand{\R}{\mathbb{R}}
\newcommand{\X}{\mathcal{X}}

\newcommand{\mA}{\mathcal{A}}
\newcommand{\mC}{\mathcal{C}}
\newcommand{\mD}{\mathcal{D}}

\newcommand{\mL}{\mathcal{L}}
\newcommand{\mW}{\mathcal{W}}
\newcommand{\mX}{\mathcal{X}}
\newcommand{\mP}{\mathcal{P}}

\newcommand{\tp}{p}
\newcommand{\ps}{\ensuremath{p^*}}
\newcommand{\ty}{\mathbf{y}_p}

\newcommand{\x}{\mathbf{x}}

\newcommand{\y}{\mathbf{y}}

\newcommand{\zo}{\ensuremath{\{0,1\}}}

\renewcommand{\hat}{\wh}
\newcommand{\eps}{\varepsilon}

\newcommand{\abs}[1]{\ensuremath \Bigl\lvert #1 \Bigr\rvert}

\newcommand{\ind}[1]{\ensuremath{\mathbf{1}[#1]}}

\newcommand{\C}{\mathcal{C}}

\DeclareMathOperator*{\E}{\mathbf{E}}

\DeclareMathOperator*{\argmin}{arg\,min}

\newcommand{\Ber}{\mathrm{Ber}}

\newcommand{\OPLC}{\mathrm{OP}_{\mL, \mC}}
\newcommand{\OPLCzero}{\mathrm{OP}_{\mL, \mC, 0}}
\newcommand{\OPLCeps}{\mathrm{OP}_{\mL, \mC, \epsilon}}

\newcommand{\btw}{\mathrm{Btwn}}
\newcommand{\conv}{\mathrm{Conv}}
\newcommand{\lpr}{\ell^{\mathrm{pr}}}

\newcommand{\ECE}{\mathsf{ECE}}

\newcommand{\EAE}{\mathsf{EAE}}

\usepackage{enumitem}

\definecolor{pastelgreen}{RGB}{179,205,182}
\definecolor{pastelblue}{RGB}{177,197,222}
\definecolor{pastelred}{RGB}{226,182,181}

\definecolor{darkgreen}{RGB}{76,140,92}
\definecolor{darkblue}{RGB}{64,104,170}
\definecolor{darkred}{RGB}{170,70,70}

\usepackage{bm}

\renewcommand{\epsilon}{\varepsilon}

\usepackage[T1]{fontenc}
\usepackage[utf8]{inputenc}
\begin{document}

\maketitle
Omniprediction \cite{omni} is a learning guarantee which requires a single predictor to be competitive relative to the best hypothesis from a benchmark class for any loss chosen from a family of loss functions. Loss Outcome Indistinguishability (loss OI for short) \cite{lossOI} is a stronger notion that implies omniprediction. It requires the predicted distribution on labels to be indistinguishable from the true distribution to tests that depend on the loss functions and the benchmark class. 
Multiaccuracy and multicalibration \cite{hkrr2018} are multigroup fairness notions that generalize classical notions of calibration and accuracy in expectation.  Most known learning algorithms for omniprediction (both for the standard notion and for strengthenings like loss OI) rely on some version of these multigroup fairness notions, or on an intermediate notion called calibrated multiaccuracy~\cite{lossOI}. We ask if this is necessary:
\begin{center}
{\em Does omniprediction require some form of multigroup fairness?}
\end{center}

We show that the answer is no for (plain) omniprediction, and yes for loss OI.  
\begin{itemize} \item \textbf{Omniprediction does not require accuracy in expectation.}
A sequence of works shows that multicalibration or calibrated multiaccuracy imply omniprediction \citep{omni, lossOI, OKK25}. We rule out even a weak converse, by showing that omniprediction for proper losses does not imply even accuracy in expectation, a much weaker notion than any of calibration, multiaccuracy, or multicalibration. 

\item \textbf{Loss OI requires calibrated multiaccuracy.} 
The work of \cite{lossOI}  showed how to achieve loss OI from a combination of calibration and multiaccuracy. We show a converse: loss OI is equivalent to a form of calibrated multiaccuracy. 
\end{itemize}

\thispagestyle{empty}
\newpage

\setcounter{page}{1}

\section{Introduction}

Consider the setting of binary classification, where labeled data points $(\x, \y)$ are sampled from an unknown distribution over a domain $\X \times \{0, 1\}$, and our goal is to learn a \emph{predictor}, which is a function $\tp : \X \to [0, 1]$ that estimates the probability that the label $\y$ will be $1$ given features $\x \in \X$. In this work, we investigate the relationship between the notion of \emph{omniprediction} introduced by \cite{omni} and its variants \cite{lossOI, gopalan2024swap}, and the various \emph{multigroup fairness} criteria satisfied by the predictor $\tp$.

\emph{Omniprediction} is a powerful new paradigm for loss minimization, first proposed by~\cite{omni}.
Specifically, an \emph{$(\mL, \mC)$-omnipredictor} is a predictor that performs as well as any hypothesis from a baseline class $\mC$, for any loss function coming from a family $\mL$.
The first construction for such a ``loss-oblivious'' learning notion was achieved through a surprising connection to \emph{multigroup fairness} notions; specifically, through the notion of $\C$-\emph{multicalibration}, which first originated in the algorithmic fairness literature \cite{hkrr2018}.

\emph{$\mC$-multicalibration} requires the predictor $\tp$ to be \emph{calibrated} conditioned on the event $\x \in S$, for all $S$ in a prespecified collection $\mC \subseteq 2^{\X}$ of protected subsets of the domain. The weaker notion of \emph{$\mC$-multiaccuracy} from \cite{hkrr2018} requires that the expectation of the predictor matches the expectation of the label, conditioned on membership in $S \in \mC$. An intermediate notion is \emph{$\mC$-calibrated multiaccuracy}, introduced in \cite{lossOI}, which requires both global calibration (without conditioning) and $\mC$-multiaccuracy. These notions fall under the umbrella of \emph{multigroup fairness}, a viewpoint that attempts to bridge the gap between \emph{group fairness} and \emph{individual fairness} \cite{dwork2012fairness} by allowing the collection of protected groups $\mC$ in the domain to be numerous and overlapping. %For example, $\mC$-multicalibration ensures that being assigned a score of $p(\x) \in [0, 1]$ ``means the same thing'' to an evaluator regardless of one's group membership(s).

Although originally motivated by fairness considerations in prediction, multicalibration  and multiaccuracy can be viewed as extensions of two classical notions in forecasting: \emph{calibration} ($\E[\y|\tp(\x)] = p(\x)$, \cite{Dawid}) and \emph{accuracy in expectation} ($\E[\y] = \E[\tp(\x)]$), respectively. $\C$-multiaccuracy is also equivalent to the notion of $\C$-\emph{computational indistinguishability} in the pseudorandomness and complexity theory literature, where constructions of multiaccurate predictors originally appeared~\citep{TrevisanTV09}. An active body of research has drawn connections between these notions and topics in learning theory and theoretical computer science, including strong guarantees for learnability \cite{omni, lossOI, OKK25}, pseudorandomness \cite{OI, DworkLLT} and computational complexity \cite{casacuberta2024complexity, CGKR26}. 

Multiaccuracy and multicalibration are motivated by fairness, and their definitions do not reference loss functions. 
But \cite{omni} showed that
these multigroup fairness notions imply surprisingly strong and robust loss minimization guarantees:
namely, $\mC$-multicalibration (appropriately generalized to real-valued function classes $\mC$) yields omniprediction for all convex, Lipschitz loss functions.\footnote{This requires extending the notion of multicalibration to allow for real-valued functions $c$ rather than indicators of subsets.} 
This initiated an exciting line of research, which derived various flavors of omniprediction from different multigroup fairness notions \citep{lossOI, gopalan2024swap, gopalan2024omnipredictors, GargJRR24, HuTY25, DworkHIPT25, OKK25, CGKR26}. Even more recent state-of-the-art constructions of omnipredictors go through some form of multigroup fairness: either multicalibration \citep{omni, GargJRR24} or calibrated multiaccuracy \citep{DworkHIPT25, OKK25, HuTY25}. 
This naturally raises the following fundamental question:

\begin{center}
{\em Is (some form of) multigroup fairness necessary for omniprediction? \\ What about calibration or accuracy in expectation?}
\end{center}

These were considered in the work of \cite{gopalan2024swap}, who call it a glaring question in the theory of omniprediction. They showed that multicalibration is both necessary and sufficient for a stronger version of omniprediction called \emph{swap omniprediction}.\footnote{Technically, they show an equivalence with \emph{swap multicalibration}, but this is essentially the same as standard multicalibration up to polynomial changes in the error parameters.} However, the question of whether multigroup fairness is required for the standard notion of omniprediction raised in that work has remained open.

A strengthening of omniprediction is the notion of \emph{$(\mL, \mC)$-loss outcome indistinguishability (OI)}, introduced by \cite{lossOI}. Intuitively, given a loss function $\ell \in \mL$ and hypothesis $c \in \mC$, a predictor $p$ satisfies loss OI if the excess loss that $p$ suffers compared to the hypothesis $c$ is roughly unchanged on replacing the real outcomes $\y$ with \emph{modeled} or \emph{simulated} outcomes $\y_\tp|\x \sim \Ber(\tp(\x))$. The name ``outcome indistinguishability'' is derived from the more general OI framework introduced by \cite{OI}. It was shown by \cite{lossOI} that calibrated multiaccuracy implies loss OI, which in turn implies omniprediction. 

The motivation for studying loss OI is that it can be tailored to specific classes of loss functions, in contrast with known results on omniprediction (e.g., \cite{omni} which applies to all Lipschitz, convex losses), and this can give more efficient algorithms (see the discussion in \cite{lossOI}). Indeed, the best known algorithms for omniprediction for specific loss families \cite{OKK25,HuTY25} use the loss OI formulation as their starting point. For instance, the weaker condition of \emph{proper} calibrated multiaccuracy is crucially used by \cite{OKK25} to achieve loss OI for proper losses with optimal rate.
However, as in the case of omniprediction, it was unclear from previous work whether either calibration or multiaccuracy is \emph{necessary} for loss OI.  The work of \cite{gopalan2024swap} showed an equivalence between a stronger variant called \emph{swap} loss OI, and a suitably defined notion of multicalibration.

\subsection{Results Overview}

In this work, we show a sharp contrast between the notions of omniprediction and loss OI: while omniprediction does \emph{not} require any of calibration, multiaccuracy, or multicalibration, loss OI is equivalent to a form of calibrated multiaccuracy.

\paragraph{Omniprediction does not require accuracy in expectation (\Cref{sec:closure}).} 
Formally, given a hypothesis class $\mC$, a family $\mL$ of loss functions, and $\epsilon > 0$, an $(\mL, \mC, \epsilon)$-\emph{omnipredictor} is a predictor $p$ such that for every $\ell \in \mL$,
\[
    \E[\ell(\y, k_{\ell} \circ p(\x))] \leq \min_{c \in \C} \E[\ell(\y, c(\x))] + \epsilon,
\]
where $k_{\ell}(v) = \argmin_{t \in \mA} \ell(v, t)$ is the optimal action function for each loss function $\ell: [0,1] \times \mA \rightarrow \mathbb{R}$.
Recall that a predictor is \emph{accurate in expectation} if $\E[\y] = \E[p(\x)]$.
The \emph{expected accuracy error} ($\EAE$) of $p$ is naturally defined as $\EAE(p) = \lvert\E[\y]-\E[p(\x)]\rvert$.

While most known constructions of omnipredictors require multigroup fairness notions \citep{omni, GargJRR24,DworkHIPT25, OKK25, HuTY25}, we show that the converse is not true; namely, we show that omniprediction for proper losses does not require accuracy in expectation. We require that the class $\mC$ and marginal distribution $\mD_\X$ over $\X$ are such
that there exists a Boolean function $f$ which is strongly hard on average for $\mC$, meaning that for every $c \in \mC$,
\begin{align} 
\label{eq:hard-on-av}
    \left| \E_{\mD_\X}\left[c(\x)\Big(f(\x)-\dfrac{1}{2}\Big)\right] \right| \leq \epsilon.
\end{align}

\begin{theorem}\label{thm:sec3-main-informal}
    Fix $\epsilon > 0$ and a distribution $\mD_{\X}$ over $\X$. Let $\C$ be any hypothesis class that admits a strong hard-on-average function $f$ with respect to $\mD_{\X}$, as in Equation \eqref{eq:hard-on-av}. Consider the distribution $\mD = (\x, f(\x))$ for $\x \sim \mD_\X$. 
    There exists a predictor $p: \X \rightarrow [0,1]$ such that (1) $p$ is an $(\mL, \mC, O(\epsilon))$-omnipredictor for the class $\mL$ of all convex, Lipschitz losses, and (2) $\EAE(p) \geq 1/4$.
\end{theorem}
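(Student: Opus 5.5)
The plan is to pick a predictor that is deliberately biased but never plays a worse action than the best constant action, and then to show, using the hardness of $f$, that no $c \in \mC$ beats that constant action by more than $O(\epsilon)$.

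\textbf{Construction.} Labels are deterministic, $\y = f(\x)$. I would define
\[
p(\x) = \begin{cases} 1 & \text{if } f(\x) = 1,\\ 1/2 & \text{if } f(\x)=0.\end{cases}
\]
I will assume $\mC$ contains the constant function $1$, or some function with constant correlation with the all-ones function. Then \eqref{eq:hard-on-av} gives $|\E[f(\x)] - 1/2| \le \epsilon$. Hence $\E[p(\x)] = \tfrac12 + \tfrac12\E[f(\x)] \approx 3/4$ while $\E[\y] \approx 1/2$, so $\EAE(p) \ge 1/4 - O(\epsilon)$. The exact constant $1/4$ can be recovered by rescaling, for example by using $p = 1$ versus $p = 1/2-\delta$.

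\textbf{Upper bound on the loss of $k_\ell \circ p$.} Fix $\ell \in \mL$ and write $t^* = k_\ell(1/2)$ for the optimal constant action against a fair coin.
\begin{itemize}
\item On $\{f=1\}$ the predictor plays $k_\ell(1)$, which minimizes $\ell(1,\cdot)$, so its loss is at most $\ell(1,t^*)$.
\item On $\{f=0\}$ it plays exactly $t^*$.
\end{itemize}
Hence $\E[\ell(\y,k_\ell(p(\x)))] \le \E[\ell(f(\x),t^*)]$. Since $\E[f] \approx 1/2$, this is $\ell(1/2,t^*) + O(\epsilon)$, where $\ell(v,t)$ is extended linearly in $v$ and bounded Lipschitz losses are assumed.

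\textbf{Lower bound on the benchmark.} For binary outcomes every loss is affine in $y$, so $\ell(y,t) = \ell(0,t) + y\,g_\ell(t)$ with $g_\ell(t) = \ell(1,t)-\ell(0,t)$. Thus for any $c\in\mC$,
\[
\E[\ell(\y,c(\x))] = \E[\ell(1/2,c(\x))] + \E\Bigl[g_\ell(c(\x))\bigl(f(\x)-\tfrac12\bigr)\Bigr].
\]
The first term is at least $\ell(1/2,t^*)$ pointwise, by optimality of $t^*$. So it suffices to show that the correlation term is $O(\epsilon)$ in absolute value.

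\textbf{Main obstacle.} The delicate step is controlling this correlation term. Hardness \eqref{eq:hard-on-av} is stated only for $c$ itself, whereas here the test function is the post-processed $g_\ell\circ c$. For convex Lipschitz $\ell$, the function $g_\ell$ is a bounded Lipschitz function of $t$. My first attempt would be to approximate $g_\ell$ uniformly on the range of $c$, and then use that hardness is preserved under the relevant closure of $\mC$. This is presumably what \Cref{sec:closure} formalizes, e.g.\ by assuming hardness against $\{g\circ c\}$ for bounded Lipschitz $g$, or against a sufficiently rich closure of $\mC$. If only linear post-processing is available, I would restrict to losses where $g_\ell$ is affine in $t$, such as squared loss, and argue separately.

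Granting this step, combining the two bounds gives
\[
\E[\ell(\y,k_\ell\circ p)] \le \min_{c\in\mC}\E[\ell(\y,c(\x))] + O(\epsilon),
\]
which together with the $\EAE$ bound proves the theorem.
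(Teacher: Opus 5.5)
\textbf{There is a genuine gap in the omniprediction step.} Your predictor and the pointwise comparison are correct: on $\{f=1\}$ the predictor plays $k_\ell(1)$, and on $\{f=0\}$ it plays $t^*=k_\ell(\tfrac12)$, so $k_\ell\circ p$ never does worse than the constant action $t^*$. This is a direct special case of the paper's between-ness argument (Theorem~\ref{thm:between} and Lemma~\ref{cor:ae}) with $q\equiv\tfrac12$ and Boolean $p^*=f$.

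The gap is in the other half: showing that $t^*$ is competitive with every $c\in\mC$, i.e.\ that $q\equiv\tfrac12$ is itself an omnipredictor. You reduce this to showing that $\E[g_\ell(c(\x))(f(\x)-\tfrac12)]=O(\epsilon)$. That claim is false under the theorem's hypotheses, not merely unproven. Take $\X=\{-1,0,1\}$ with masses $\tfrac14,\tfrac12,\tfrac14$, $c(x)=x$, $f(x)=x^2$, and $\mC=\{\pm1,\pm c\}$; hardness then holds with $\epsilon=0$. For the convex, $1$-Lipschitz loss $\ell(1,t)=0$, $\ell(0,t)=|t|$, you get $g_\ell(t)=-|t|$, and the correlation term equals $-\tfrac14$. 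The desired inequality still holds in this example, because $\E[\ell(\tfrac12,c(\x))]$ exceeds $\ell(\tfrac12,t^*)$ by exactly $\tfrac14$, but your two-term split cannot capture that slack. Your proposed repairs, hardness against $\{g\circ c\}$ or restricting to affine $g_\ell$, change the theorem. The paper does neither.

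\textbf{The missing idea is convexity.} You use convexity of $\ell(y,\cdot)$ only to get Lipschitzness of $g_\ell$. Set $\bar c=\E[c(\x)]$ and let $s_y$ be a subgradient of $\ell(y,\cdot)$ at $\bar c$. The tangent-line bound $\ell(y,c(x))\ge \ell(y,\bar c)+s_y(c(x)-\bar c)$, with $s_y=s_0+y(s_1-s_0)$ affine in $y$, gives
\[
\E[\ell(\y,c(\x))]\ \ge\ \ell(\E[f(\x)],\bar c)+(s_1-s_0)\Bigl(\E\bigl[c(\x)\bigl(f(\x)-\tfrac12\bigr)\bigr]-\bar c\,\E\bigl[f(\x)-\tfrac12\bigr]\Bigr).
\]
The error term now involves only the linear tests $c$ and $1$, so it is $O(\epsilon)$ by hardness. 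Moreover, $\ell(\E[f(\x)],\bar c)\ge \ell(\E[f(\x)],t^*)-O(\epsilon)=\E[\ell(\y,t^*)]-O(\epsilon)$. This is the content of the multicalibration-implies-omniprediction theorem of \cite{omni}. The paper simply invokes it, after noting that the constant predictor $\tfrac12$ is $(\mC,\epsilon)$-multicalibrated exactly by the hardness assumption.

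\textbf{A smaller point on the $\EAE$ constant.} Your rescaling fix goes the wrong way. Moving the value on $\{f=0\}$ to $\tfrac12-\delta$ decreases $\E[p]$, and it also breaks the ``plays exactly $t^*$'' step. Instead, as in Lemma~\ref{cor:ae}, choose between your $p$ and its mirror image ($\tfrac12$ on $\{f=1\}$, $0$ on $\{f=0\}$), taking whichever matches the heavier of $\{f=0\}$ and $\{f=1\}$. The same pointwise comparison applies, and this gives $\EAE\ge\tfrac14$ exactly.
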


As we observe in Remark~\ref{remark:proper}, the result can also be extended to proper losses using the recent result of \cite{OKK25}.
Since accuracy in expectation is an especially mild condition, implied by any one of calibration, multiaccuracy, or multicalibration, this result implies that \emph{none} of these conditions are required for omniprediction. This also shows that neither calibration nor multiaccuracy is necessary for agnostic learning, although together they are sufficient, as shown in \cite{CGKR26}. The work of \cite{gopalan2024swap} asked if there is {\em a characterization of standard omniprediction in terms of multigroup fairness notions}; our work answers this in the negative.

The theorem statement above does not address the efficiency of computing the predictor $p$, but it is essentially identical to that of computing the hard function $f$. If $f$ is efficiently computable by some class (it would need to be more expressive than $\mC$ itself), then so is $p$. 

The key conceptual idea in the proof is to study the closure properties of the set of omnipredictors. We show that they are closed under a between-ness operation, which implies that the set of omnipredictors is quite large. We then show that such a large set must contain points that significantly violate even simple conditions like calibration and accuracy in expectation. 
The proof involves three steps.
\begin{enumerate}
    \item First we prove that if $q$ is an omnipredictor, then any predictor $r$ that lies between $q$ and the Bayes optimal predictor $p^* = \E[\y|x]$, point-wise for every $x \in \X$, is also an omnipredictor (\cref{thm:between}).
    Intuitively, moving a predictor point-wise towards the Bayes optimal $p^*$ can only further reduce its loss, and so the omniprediction guarantee satisfied by $q$ is preserved.
    \item Second, we apply this observation starting from an omnipredictor $q$ satisfying $\E[|p^*(\x)-q(\x)|] \geq \alpha$ for some large $\alpha$. By decomposing the $\ell_1$ loss between $p^*$ and $q$ into the regions $\{x : p^*(x) \geq q(x)\}$ and $\{x : q(x) > p^*(x)\}$, we construct a predictor $r$ that lies between $q$ and $p^*$, and hence is also an omnipredictor by~\cref{thm:between}, but which has $\EAE(r) \geq \alpha/2$ (\cref{cor:ae}).
    We remark that this holds even in the case where $q$ (and hence $r$) are perfect omnipredictors (i.e., with $\epsilon=0$).
    \item Third, we show that for any hypothesis class $\C$ that admits a strong hard-on-average Boolean function, there exists an omnipredictor $q$ satisfying $\E[|p^*(\x)-q(\x)|]=1/2$, and hence by Lemma~\ref{cor:ae} an omnipredictor $r$ satisfying $\EAE(r) \geq 1/4$ (\cref{thm:eae_large}).
\end{enumerate}

\paragraph{Loss OI requires calibrated multiaccuracy (Section~\ref{sec:loss-oi}).} 
We show that the situation is very different for the stronger notion of loss OI (which implies omniprediction \cite{lossOI}). 
Indeed, we prove that, in contrast to omniprediction, both calibration and multiaccuracy are necessary as long as $\mL$ contains some simple and natural loss functions. 
This result establishes a satisfying converse to a key result of \cite{lossOI}, which showed that calibration and multiaccuracy are together sufficient for loss OI. 

Formally, given a predictor $p$, we associate a simulated label distribution $\mD(\tp)$ on $(\x, \ty) \in \X \times \zo$, where $\x \sim \mD_\X$ and $\ty|\x \sim \Ber(p(\x))$.
For $\ell \in \mL$ and $c \in \mC$, we define the distinguisher $u_{\ell, c}: \X \times  \zo \times [0,1]$ as $ u_{\ell, c}(x, y, v) =  \ell(y, c(x)) - \ell(y, k_\ell(v))$.
We can now formally state the definitions of loss OI, decision OI, and hypothesis OI \cite{lossOI}:

\begin{definition}
    We say that the predictor $\tp:\X \to [0,1]$ satisfies:
    \begin{itemize}
        \item $(\mL, \mC, \alpha)$-\emph{loss OI} if for every $\ell \in \mL$ and $c \in \mC$,
        \[
            \abs{\E\bigl[u_{\ell, c}(\x, \y, \tp(\x))\bigr] - \E\bigl[u_{\ell, c}(\x, \ty, \tp(\x))\bigr]} \leq \alpha.
        \]
        \item $(\mL, \beta)$-\emph{decision OI} if for every $\ell \in \mL$, 
        \[
            \abs{\E\big[\ell(\y, k_\ell(\tp(\x)))\big] - \E\big[\ell(\ty, k_\ell(\tp(\x)))\big]} \leq \beta.
        \]
        \item $(\mL, \mC, \gamma)$-\emph{hypothesis OI} if for every $\ell \in \mL$ and every $c \in \mC$, 
        \[
            \abs{\E\big[\ell(\y, c(\x))\big] - \E\big[\ell(\ty, c(\x))\big]} \leq \gamma. 
        \]
    \end{itemize}
\end{definition}

The work of \cite{lossOI} shows that decision OI corresponds to a form of calibration, whereas
hypothesis OI corresponds to a form of multiaccuracy.
They also show that decision and hypothesis OI together (i.e., a form of calibrated multiaccuracy) suffice for loss OI.

In this paper we show a converse to this statement. 
The only mild condition that we require on the loss function is for it to have a \emph{neutral action}: an action $t \in \mA$ where the loss is the same whether $y$ is $0$ or $1$. Formally, if $\partial \ell(t) = \ell(1, t) - \ell(0, t)$, then a neutral action is one where
$\partial \ell(t) = 0$. For instance, $t=1/2$ is a neutral action for the squared loss and the cross-entropy loss. In general, a simple shift-based transformation can transform an arbitrary loss into one with a neutral action, while essentially preserving its behavior (see \Cref{sec:loss-oi}).

\begin{theorem}
    Given $\ell$ with a neutral action and $\mA \subseteq \mC$,\footnote{We think of each $t \in \mA$ as the constant function that always takes the value $t$.} if a predictor satisfies $(\ell, \C, \alpha)$-loss OI, then it also satisfies both $(\ell, \alpha)$-decision OI and $(\ell, \mC, 2\alpha)$-hypothesis OI.
\end{theorem}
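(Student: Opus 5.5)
The plan is to pick the hypothesis $c$ in the loss OI condition cleverly, so that one of the two terms of $u_{\ell,c}$ drops out. The key observation is that for any action $t$ and any $x$,
\[
\ell(y,t) = \ell(0,t) + y\,\partial\ell(t).
\]
Hence the $\ell(0,\cdot)$ part cancels in every difference between real and simulated outcomes. For any function $a:\X\to\mA$ this gives
\[
\E[\ell(\y,a(\x))]-\E[\ell(\ty,a(\x))] = \E\bigl[(\y-\tp(\x))\,\partial\ell(a(\x))\bigr],
\]
using $\E[\ty\mid\x]=\tp(\x)$. So loss OI for the pair $(\ell,c)$ says exactly
\[
\abs{\E\bigl[(\y-\tp(\x))\bigl(\partial\ell(c(\x)) - \partial\ell(k_\ell(\tp(\x)))\bigr)\bigr]}\le\alpha .
\]
Decision OI is the same expression with only the $k_\ell$ term, and hypothesis OI is the same with only the $c$ term.

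\textbf{Decision OI.} Let $t_0\in\mA$ be a neutral action. Since $\mA\subseteq\mC$, the constant hypothesis $c\equiv t_0$ lies in $\mC$. For this $c$ we have $\partial\ell(c(\x))=0$, so the first term of $u_{\ell,c}$ contributes identically under $\y$ and $\ty$. Concretely, $\E[\ell(\y,t_0)]=\E[\ell(0,t_0)]=\E[\ell(\ty,t_0)]$. The loss OI inequality for this $c$ then reduces exactly to
\[
\abs{\E[\ell(\y,k_\ell(\tp(\x)))]-\E[\ell(\ty,k_\ell(\tp(\x)))]}\le\alpha,
\]
which is $(\ell,\alpha)$-decision OI.

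\textbf{Hypothesis OI.} For an arbitrary $c\in\mC$, write the hypothesis-OI difference as the sum of two pieces: the loss-OI difference for $(\ell,c)$, and the decision-OI difference. Using linearity of expectation,
\[
\E[\ell(\y,c)]-\E[\ell(\ty,c)] = \bigl(\E[u_{\ell,c}(\x,\y,\tp)]-\E[u_{\ell,c}(\x,\ty,\tp)]\bigr) + \bigl(\E[\ell(\y,k_\ell(\tp))]-\E[\ell(\ty,k_\ell(\tp))]\bigr).
\]
The first bracket is at most $\alpha$ in absolute value by loss OI. The second is at most $\alpha$ by the decision OI just proved. The triangle inequality then gives $2\alpha$, which is $(\ell,\mC,2\alpha)$-hypothesis OI.

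There is no real obstacle beyond the bookkeeping. The one point to check is that the expectations of the $\ell(0,\cdot)$ terms are finite and match under $\mD$ and $\mD(\tp)$. They match because both distributions share the marginal $\mD_\X$, and the actions are functions of $\x$ alone. I would also remark that the neutral action is used only in the decision OI step.
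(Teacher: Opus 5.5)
Your proof is correct and follows the paper's argument essentially verbatim. The constant neutral hypothesis $t_0 \in \mC$ turns the loss OI test $u_{\ell,t_0}$ into the decision OI condition, and hypothesis OI then follows by writing $\ell(y,c(x)) = u_{\ell,c}(x,y,v) + \ell(y,k_\ell(v))$ and applying the triangle inequality. Your rewriting via $\ell(y,t)=\ell(0,t)+y\,\partial\ell(t)$ is not needed, but it is harmless and makes the link to \cref{thm:decision-oi} explicit.
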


The idea behind our proof is the following.
The key property ensured by a neutral action $t$ is that the loss is independent of the (distribution of the) label and consequently $\E[\ell(\y, t)] = \E[\ell(\y_p, t)]$.
Under the assumption that $p$ satisfies $(\ell, \C, \alpha)$-loss OI and that $t \in \mC$, this implies that the inequality $\big|\E\bigl[u_{\ell, c}(\x, \y, \tp(\x))\bigr] - \E\bigl[u_{\ell, c}(\x, \ty, \tp(\x))\bigr]\big| \leq \alpha$ is in particular satisfied with  the distinguisher $u_{\ell, t}$ (i.e., for the neutral action $t$).
Using the fact that $\E[\ell(\y, t)] = \E[\ell(\y_p, t)]$, we can turn the loss OI guarantee for $u_{\ell, t}$ into a decision OI guarantee. From here we derive the hypothesis OI guarantee for every $c \in \C$ by applying a triangle inequality and also using the general loss OI guarantee. As we shall discuss in \Cref{sec:loss-oi}, the assumption of a neutral action is not strictly necessary and can easily be relaxed or replaced with other, similarly mild assumptions.

\paragraph{Summary.} Our results demonstrate an important distinction between the notions of omniprediction and loss OI: the former does not enforce any form of multigroup fairness guarantees (or even accuracy in expectation), while the latter requires a form of calibrated multiaccuracy. This resolves an open question in the literature \cite{gopalan2024swap}. 
Our findings suggest that although most algorithmic results on omniprediction go through the ``loss OI route'', they might be sacrificing efficiency, since we now know that omnipredictors are ``more plentiful'' than
predictors satisfying loss OI.

\subsection{Related Work}

Our work continues a line of investigation into connections between multigroup fairness and various notions of learning. Much of this work aims to derive learning guarantees from multigroup fairness notions, including calibration \citep{omni, lossOI, gollakota2023agnostically, DworkLLT, gopalan2024swap, gopalan2024omnipredictors, GargJRR24, HuTY25, DworkHIPT25, OKK25, CGKR26}. Papers which investigate the reverse connection include \cite{gopalan2024swap, CGKR26}. Alternative algorithms for omniprediction, which are based on 2-player game methods rather than boosting algorithms, do achieve optimal error bounds for omniprediction (up to polynomial factors) through online-to-batch conversions \cite{gibbs2025sample, balakrishnan2025panprediction}.

The work of \cite{gopalan2024swap} presents tight connections between the swap analogues of omniprediction and loss OI on the one hand to swap multicalibration for families $\mC'$ derived from $\mL$ and $\mC$ on the other hand. From this equivalence they conclude that standard omniprediction, which they show is weaker than swap omniprediction, is unlikely to require multicalibration, which they show is equivalent to swap multicalibration, using a rather indirect chain of reasoning. However, this does not rule out the necessity of some weaker multigroup notion, or even calibration.

Multigroup fairness notions have several connections to complexity theory and pseudorandomness. An indistinguishability perspective on these notions was presented in \cite{OI}, which motivated the definition of loss OI. Connections to hardcore set construction in complexity theory and questions in pseudorandomness were first noted by \cite{TrevisanTV09}, and expanded upon in \cite{casacuberta2024complexity, CGKR26, HuVadhan25}. \cite{DworkLLT} connect multigroup fairness notions of varying strengths to graph regularity lemmas in combinatorics; see also \cite{dwork2025efficient} for more connections to complexity theory. 
In recent work, Gibbs and Tibshirani \cite{gibbs2025sample} proved that when calibration error is measured via expected calibration error ($\ECE$), both multicalibration and calibrated multiaccuracy  are strictly harder problems than omniprediction or loss OI in terms of sample complexity.  Their result demonstrates a tension between the desiderata of low $\ECE$ and multiaccuracy. Their result relies critically on $\ECE$ being the calibration measure: while they frame their result as showing that {\em omniprediction (or rather loss OI) is easier to achieve than multicalibration or calibrated multiaccuracy}, this is only true if calibration error is measured using $\ECE$.  Indeed, the result of \cite{OKK25} shows how one can obtain loss OI with optimal sample complexity using predictors that satisfy multiaccuracy and a weaker form of calibration they call \emph{proper calibration}, which captures decision OI for proper losses. Our result shows that proper calibration and multiaccuracy are both necessary to obtain loss OI for proper losses. 
\section{Preliminaries}
\label{sec:preliminaries}

We will use boldface (e.g. $\x$) to denote random variables. Given a distribution $\mD$ over pairs $(\x, \y) \in \X \times \{0, 1\}$, we denote the marginal distribution of $\x$ by $\mD_\X$.
A \emph{predictor} is a function $p : \mX \to [0, 1]$, a \emph{hypothesis} is a function $c : \mX \to [-1, 1]$, and a \emph{loss} is a function $\ell : \{0, 1\} \times \mA \to \R$ for a set of \emph{actions} $\mA$. The \emph{Bayes optimal predictor} is given by $\ps(x) = \E[\y|x]$. 
The reason to have both predictors and hypotheses is that certain notions like calibration are traditionally defined for $[0,1]$-valued functions, whereas others like correlation are more natural for $[-1,1]$-valued functions. To every predictor $\tp$, we associate a simulated label distribution $\mD(\tp)$ on $(\x, \ty) \in \X \times \zo$ where $\x \sim \mD_\X$ and $\ty|\x \sim \Ber(p(\x))$. In other words, $\mD(\tp)$ is a distribution whose Bayes optimal predictor is $\tp$. By this notation, $\mD = \mD(p^*)$; we prefer the former for simplicity.

\paragraph{Multigroup fairness notions.}
We now define \emph{multiaccuracy}, \emph{multicalibration}, and (global, weight-restricted) \emph{calibration}:

\begin{definition}[\cite{hkrr2018,kearns2018,GopalanKSZ22}]
    Fix a hypothesis class $\mC$ of hypotheses $c:\X \to [-1,1]$ and a class $\mW$ of \emph{weight functions} $w : [0, 1] \to [-1, 1]$. We say a predictor $p$ satisfies \emph{$(\mC, \delta)$-multiaccuracy} (with respect to a distribution $\mD$ over $(\x, \y)$ pairs) if
    \[
        \max_{c \in \mC} \, \abs{\E\bigl[c(\x)(\y-\tp(\x))\bigr]} \le \delta.
    \]
    We say that $\tp$ satisfies \emph{$(\mC, \delta)$-multicalibration} if
    \[
        \max_{c \in \mC} \, \E\abs{\E\bigl[c(\x)(\y-\tp(\x)) \mid \tp(\x)\bigr]} \le \delta.
    \]
    We say that $\tp$ satisfies
    \emph{$(\mW, \delta)$-weight restricted calibration} if
    \[
        \max_{w \in \mW} \, \abs{\E\bigl[w(\tp(\x))(\y-\tp(\x))\bigr]} \le \delta.
    \]
\end{definition}
Assume that $\mC$ and $\mW$ are closed under complement and contain the constant function $\mathrm{1}$. 
We define the 
{\em expected accuracy error} ($\EAE$) and the \emph{expected calibration error} ($\ECE$) 
of a predictor as:
\[
    \EAE(\tp) = \abs{\E[\y] - \E[\tp(\x)]},
    \quad 
    \ECE(\tp) = \E\abs{\E[\y|\tp(\x)] - \tp(\x)}.
\]
Observe that $\EAE(p) \leq \ECE(p)$, $(\mC,\delta)$-multicalibration implies $\ECE(p) \leq \delta$, and either one of $(\mC,\delta)$-multiaccuracy or $(\mW, \delta)$-weight restricted calibration implies $\EAE(p) \leq \delta$ (since $\mathrm{1} \in \mC, \mW$). 

\paragraph{Loss functions.} In a loss minimization task, on input $\x \sim \mD_\X$, we take an action $t \in \mA$ (typically based on a prediction $p(\x)$ of the label), and suffer loss $\ell(\y, t)$ for a loss function $\ell: \zo \times \mA \to \R$. We allow for abstract action spaces $\mA$ which might be discrete or continuous. Typically, we pick the optimal action from a hypothesis class of functions $\mC = \{c: \X \to \mA\}$. This results in the loss minimization problem 
\begin{align}
\label{eq:agnostic-ell}
\min_{c \in \mC} \E_{(\x, \y) \sim \mD} [\ell(\y, c(\x))], 
\end{align}
which we shall refer to as \emph{agnostically learning $\mC$ with the loss function $\ell$}.

We extend $\ell$ linearly to a function $\ell: [0,1] \times \mA \to \R$ by defining
\[ \ell(p, t) = p\ell(1, t) + (1- p)\ell(0, t) = \E_{\y \sim \Ber(p)}[\ell(\y, t)].\]
An important class of loss functions is the class of \emph{proper losses}:
\begin{definition}
    We say that a loss $\ell$ is \emph{proper} if $\mA = [0,1]$ and $\ell(p,q) \geq \ell(p,p)$ for all $q \in [0,1]$.
\end{definition}

In other words, if $\y \sim \Ber(p)$, then the expected loss is minimized by the action $q = p$. 

Recall that given a loss function $\ell: [0,1] \times \mA \to \R$, the optimal action function $k_\ell: [0,1] \to \mA$ is given by $k_\ell(v) = \argmin_{t \in \mA} \ell(v,t).$
\cite{KLST23} reduce the case of arbitrary loss functions to proper losses, by showing how we can post-process an arbitrary loss function in order to make it proper:

\begin{lemma}[\cite{KLST23}]
    \label{lem:prop}
    Given a loss function $\ell: [0,1] \times \mA \to \R$, the loss function $\lpr:[0,1] \times [0,1] \to \R$ defined as $\lpr(p, q) = \ell(p, k_\ell(q))$ is a proper loss function. 
\end{lemma}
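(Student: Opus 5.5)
We need to prove that $\lpr(p,q) = \ell(p, k_\ell(q))$ satisfies $\lpr(p,q) \ge \lpr(p,p)$ for all $p,q\in[0,1]$. The plan is to unwind the definitions and use only the defining minimality of the optimal action function $k_\ell$; no earlier result of the paper is needed.

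First, I would note that $\lpr$ has action space $[0,1]$, as a proper loss requires, since $k_\ell:[0,1]\to\mA$. It is also well defined on $\{0,1\}\times[0,1]$ and extends linearly in its first argument. Indeed, $\ell(p,t) = p\ell(1,t)+(1-p)\ell(0,t)$ is affine in $p$ for each fixed $t$, so for fixed $q$ the map $p\mapsto \ell(p,k_\ell(q))$ is affine in $p$. It therefore agrees with the linear extension $\lpr(p,q) = p\,\lpr(1,q)+(1-p)\,\lpr(0,q)$. This consistency check matters because properness is stated using the extended loss.

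Next comes the key step. Fix $p,q\in[0,1]$. By definition $k_\ell(p)\in\argmin_{t\in\mA}\ell(p,t)$, so $\ell(p,k_\ell(p))\le \ell(p,t)$ for every $t\in\mA$. Taking $t = k_\ell(q)\in\mA$ gives
\[
\lpr(p,p) = \ell(p,k_\ell(p)) \le \ell(p,k_\ell(q)) = \lpr(p,q),
\]
which is exactly the properness condition.

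The only potential obstacle is whether the argmin is attained, and how ties are handled. I would state the standing assumption, implicit in the definition of $k_\ell$, that the minimum exists for each $v$ (for example, $\mA$ finite or compact with $\ell$ continuous in $t$), and that $k_\ell$ picks any fixed minimizer. The argument above works for every tie-breaking rule, since it only uses that $k_\ell(p)$ is some minimizer.
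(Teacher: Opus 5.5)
Your proof is correct and is the standard argument: since $k_\ell(p)$ minimizes $\ell(p,\cdot)$ over $\mA$, we get $\ell(p,k_\ell(p)) \le \ell(p,k_\ell(q))$. Your remarks on linearity in $p$ and on the minimizer being attained are appropriate. The paper gives no proof of its own and simply cites \cite{KLST23}, where the result rests on this same observation.
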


We will use the well-known characterization of proper loss functions in terms of Bregman divergences \citep{GR07, Nielsen2}, which we define below. 

\begin{definition}
    \label{def:prop}
    Given a concave function $h: [0,1] \to \R$, the \emph{associated Bregman divergence} $D_h: [0,1] \times [0,1] \to \R^+$ is given by 
    \begin{align}
        D_h(p,q) = h(q) + (p -q) h'(q) - h(p), 
    \end{align}
    where $h'$ is the derivative of $h$ in the case that $h$ happens to be differentiable, or an arbitrary superderivative of $h$ otherwise.
\end{definition}

\begin{theorem}[\cite{GR07, Nielsen2}]\label{thm:proper-loss}
    If $\lpr: [0,1] \times [0,1] \to \R$ is a proper loss, then there exists a concave function $h$ such that
    \[
        \lpr(p, q) = D_h(p,q) + h(p) = h(q) + (p-q)h'(q),
    \]
    where $h'$ is the derivative of $h$ in the case that $h$ happens to be differentiable, or a superderivative of $h$ otherwise. Moreover, $h$ is given by the formula $h(p) = \lpr(p,p)$, and $h'(p) = \lpr(1,p) - \lpr(0,p)$ is a superderivative.
\end{theorem}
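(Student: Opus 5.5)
The plan is to exploit the fact that, by construction, $\lpr(p,q)$ is \emph{affine} in its first argument. Recall that every loss is extended linearly from $\zo$ to $[0,1]$. Hence for each fixed $q\in[0,1]$,
\[
    \lpr(p,q) = \lpr(0,q) + p\bigl(\lpr(1,q)-\lpr(0,q)\bigr).
\]
Set $h(q) = \lpr(q,q)$ and $g(q) = \lpr(1,q)-\lpr(0,q)$, the latter being the candidate superderivative named in the statement. Evaluating the affine function $p\mapsto \lpr(p,q)$ at $p=q$ gives $h(q) = \lpr(0,q) + q\,g(q)$. Subtracting this from the display yields the identity
\[
    \lpr(p,q) = h(q) + (p-q)\,g(q) \qquad\text{for all } p,q\in[0,1].
\]
This is the claimed formula, once we check that $h$ is concave and that $g$ is a superderivative of $h$.

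Both remaining facts come from properness. By the definition of a proper loss, $\lpr(p,p) \le \lpr(p,q)$ for all $p,q$. Substituting the identity above gives
\[
    h(p) \le h(q) + (p-q)\,g(q) \qquad\text{for all } p,q\in[0,1].
\]
This says exactly that the affine function $p\mapsto h(q)+(p-q)g(q)$ lies above $h$ everywhere and touches it at $p=q$, so $g(q)$ is a superderivative of $h$ at $q$. Concavity then follows because
\[
    h(p) = \min_{q\in[0,1]} \bigl[h(q) + (p-q)g(q)\bigr].
\]
The minimum is attained at $q=p$, so $h$ is a pointwise infimum of affine functions of $p$, and such an infimum is concave. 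Finally, $D_h(p,q) = h(q)+(p-q)g(q)-h(p)$ is nonnegative by the inequality just derived. Rearranging gives $\lpr(p,q) = D_h(p,q) + h(p)$.

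There is no real obstacle here; the argument is essentially two lines. The only points needing care are bookkeeping. First, $h$ must be finite-valued, which holds because the loss is real-valued. Second, the superderivative appearing in \Cref{def:prop} must be the specific choice $g$. When $h$ is differentiable at an interior $q$, every superderivative coincides with $h'(q)$, so $g(q)=h'(q)$ there. Elsewhere, including the endpoints, we simply take $g$ as the chosen superderivative, which \Cref{def:prop} explicitly permits.
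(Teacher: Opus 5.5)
Your proof is correct. The paper gives no proof of its own here, only the citation to \cite{GR07, Nielsen2}, and your argument is the standard Savage/Gneiting--Raftery argument behind that citation. You use linearity of the extension in $p$ to write $\lpr(p,q)=h(q)+(p-q)g(q)$, and then read properness as saying that each affine function $p\mapsto\lpr(p,q)$ supports $h(p)=\min_q \lpr(p,q)$ at $q$. Your handling of the endpoints is also right: there $g$ can differ from the one-sided derivative, so $g$ itself must serve as the superderivative.
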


The key property of $\lpr(p,q)$ is that it is linear in $p$, so it can be viewed as the linear extension of
$\lpr(y,q) = h(q) + (y-q)h'(q)$.
If the loss functions we consider are bounded to lie in the range $[0,1]$, then it follows that
$h'(q) = \lpr(1,q) - \lpr(0,q) \in [-1,1]$.

\section{Omniprediction and Accuracy in Expectation}
\label{sec:closure}

In this section, we state and prove our result showing that omniprediction does not even require the very mild notion of accuracy in expectation.

Informally, a predictor $\tp$ is an \emph{$(\mL, \mC)$-omnipredictor} if for every loss $\ell \in \mL$, we would achieve better loss by not taking actions prescribed by any $c \in \mC$ but instead taking the \emph{optimal action according to $\tp$}, which we now define.
Specifically, given a loss function $\ell: [0,1] \times \mA \to \R$, the {\em optimal action} function $k_\ell: [0,1] \to \mA$ is defined as $k_\ell(v) = \argmin_{t \in \mA} \ell(v,t)$.\footnote{If there are multiple actions that minimize $\ell$, we pick one arbitrarily.}

We remark that the function $k_\ell$ is loss-specific, but is typically simple for standard loss functions.  An omnipredictor allows for simultaneous loss minimization across a large class of loss functions, without considering each loss separately:

\begin{definition}[\cite{omni}]
For a family $\mL$ of loss functions, a hypothesis class $\mC$ and $\epsilon > 0$, an $(\mL, \mC, \epsilon)$-\emph{omnipredictor} is a predictor $\tp$ such that for every $\ell \in \mL$, we have
\begin{align}
    \label{eq:op}
    \E[\ell(\y, k_\ell \circ \tp(\x))] \leq \min_{c \in \mC} \E[\ell(\y, c(\x))] + \epsilon.
\end{align}
\end{definition} 

The right hand side is the loss incurred by agnostically learning $\mC$ with loss $\ell$ up to an additive factor of $\epsilon$, the optimal solution to which will typically depend on the choice of $\ell$. The power of omniprediction comes from the fact that the predictor $\tp$ is learned from the data in one shot. 
Its predictions can be used to minimize any loss from $\mL$ relative to $\mC$ without any further learning.
The existence of omnipredictors follows from observing that $p^*$ is an omnipredictor for all $(\mL, \mC)$. The key challenge with omnipredictors is efficient learning: prior work requires the algorithm to output an efficiently computable predictor.
\cite{omni} first showed how to construct omnipredictors using a weak agnostic learning oracle for $\C$.

Let $\mP = \{p: \X \rightarrow [0,1]\}$ denote the space of all predictors, and let
$\OPLCeps \subseteq \mP$
denote the set of all $(\mL, \mC, \epsilon)$-omnipredictors. In this section, we explore closure properties of $\OPLCeps$ under two simple operations. We can also consider the case of \emph{perfect} omniprediction, when $\epsilon = 0$; we will denote $\OPLCzero$ by $\OPLC$.

\paragraph{Convex combinations.} Given $p, q \in \mP$ and $\lambda \in [0,1]$, define the predictor $r = \lambda p + (1 - \lambda) q \in \mP$ by $r(x) = \lambda p(x) + (1 - \lambda) q(x)$. Let $\conv(p, q)$ be the set of all such convex combinations of $p, q$. 

\paragraph{Between-ness.} Given $p, q \in \mP$, we define $\btw(p,q)$ to be the set of predictors $r \in \mP$ such that for every $x \in \X$, $\min(p(x), q(x)) \leq r(x) \leq \max(p(x), q(x))$.

We say that such an $r$ \emph{lies between} $p$ and $q$. Alternately, we can view each $r \in \btw(p,q)$ as being obtained by choosing $\lambda(x) \in [0,1]$ independently for every $x \in \X$, and  setting $r(x) = \lambda(x)p(x) + (1 - \lambda(x))q(x)$. 
Note that the set $\btw(p,q)$ is convex and $\conv(p,q) \subseteq \btw(p,q)$. Observe that $\btw(p, q)$ may be $|\mX|$-dimensional, e.g. if $p(x) \neq q(x)$ for every $x \in \mX$, as opposed to $\conv(p,q)$ which is one-dimensional, so $\btw(p, q)$ is a much larger set. 

We first show that if $q$ is an omnipredictor, then any $r$ which lies between $q$ and the Bayes optimal predictor $p^*$ is also an omnipredictor. The main idea is to show that starting from an omnipredictor and moving towards the Bayes optimal predictor only reduces loss, hence it preserves the property of omniprediction. The proof uses a result of \cite{KLST23} to reduce to the case of proper losses, and a characterization of proper losses in terms of Bregman divergences of concave functions~\citep{GR07, Nielsen2}. Formally:

\begin{restatable}{theorem}{between}
\label{thm:between}
    For any classes $\mL$ and $\mC$, if $q \in \OPLCeps$, then $\btw(p^*, q) \subseteq \OPLCeps$.
\end{restatable}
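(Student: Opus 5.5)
The approach is to prove, loss by loss, that moving $q$ pointwise towards $p^*$ never increases the loss of the induced actions. Fix $\ell \in \mL$ and $r \in \btw(p^*, q)$. Since the right-hand side of \eqref{eq:op} depends only on $\ell$ and $\mC$, not on the predictor, it suffices to show
\[
\E[\ell(\y, k_\ell(r(\x)))] \le \E[\ell(\y, k_\ell(q(\x)))].
\]
Conditioning on $\x$ and using linearity of $\ell$ in its first argument, the left side equals $\E[\ell(p^*(\x), k_\ell(r(\x)))]$. By \Cref{lem:prop}, this is $\E[\lpr(p^*(\x), r(\x))]$, where $\lpr(p,q) = \ell(p, k_\ell(q))$ is proper. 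The same rewriting applies to the right side with $q$ in place of $r$. The claim therefore reduces to a pointwise statement: for every $x$,
\[
\lpr(p^*(x), r(x)) \le \lpr(p^*(x), q(x)).
\]

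To prove the pointwise inequality, I would invoke \Cref{thm:proper-loss}, which gives a concave $h$ with $\lpr(p,s) = h(p) + D_h(p,s)$. Since $h(p)$ does not depend on $s$, it suffices to show that for fixed $p$ the map $s \mapsto D_h(p,s) = h(s) + (p-s)h'(s) - h(p)$ is nonincreasing as $s$ moves towards $p$. Write $p = p^*(x)$, and suppose $q(x) \le r(x) \le p$; the other case is symmetric. The inequality $D_h(p,s_1) \ge D_h(p,s_2)$ for $s_1 \le s_2 \le p$ is the standard monotonicity of Bregman divergences along a segment. The cleanest proof I would give uses $\phi(s) = h(s) + (p-s)h'(s)$, which is the value at $p$ of the tangent line to $h$ at $s$. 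Subtracting the tangent line at $s_1$ from the one at $s_2$ and evaluating at $p$ gives
\[
\phi(s_2) - \phi(s_1) = [h(s_2) - h(s_1) - (s_2-s_1)h'(s_1)] + (p - s_2)(h'(s_2) - h'(s_1)).
\]
I would then check that both terms are $\le 0$. The first term is nonpositive because concavity puts $h$ below its tangent at $s_1$. The second is nonpositive because the superderivatives of a concave function are nonincreasing, so $h'(s_2) \le h'(s_1)$, while $p - s_2 \ge 0$.

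The main obstacle is that $h$ need not be differentiable, and \Cref{thm:proper-loss} fixes the particular superderivative $h'(s) = \lpr(1,s) - \lpr(0,s)$. The argument above needs only two facts about this choice: the tangent line at each point lies above $h$, and the chosen superderivatives are monotone nonincreasing in $s$. Both hold for any selection of superderivatives of a concave function, so the non-smooth case goes through. One should also confirm that the case $r(x) = q(x)$, as well as endpoint cases at $0$ and $1$, cause no issue; they are trivial.

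Finally, I would take expectations over $\x$ to obtain $\E[\ell(\y, k_\ell(r(\x)))] \le \E[\ell(\y, k_\ell(q(\x)))] \le \min_{c\in\mC}\E[\ell(\y,c(\x))] + \epsilon$ for every $\ell \in \mL$. This shows $r \in \OPLCeps$ with the same $\epsilon$, including the case $\epsilon = 0$.
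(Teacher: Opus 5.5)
Your proposal is correct and follows essentially the same route as the paper. Both reduce to the proper loss $\lpr$ via \Cref{lem:prop}, prove the inequality pointwise in $x$, and combine the Bregman representation with two facts: concavity (the tangent at $s_1$ lies above $h$) and monotonicity of the chosen superderivatives. Your two-term identity for $\phi(s_2)-\phi(s_1)$ is just a rearrangement of the paper's split $h(q)+(r-q)h'(q)+(p^*-r)h'(q)$.
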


\begin{proof}
    Let $\ell \in \mL$ be a loss function, and let $q \in \OPLCeps$, so that Equation \eqref{eq:op} holds. Let $r \in \btw(p^*, q)$. Our goal is to show that 
    \begin{align} 
    \label{eq:ell-redux0}
        \E_{(\x, \y) \sim \mD}[\ell(\y, k_\ell(r(\x))) ] \leq  \E_{(\x, \y) \sim \mD}[\ell(\y, k_\ell(q(\x)))].
    \end{align}
    Since $q \in \OPLCeps$, this will imply that $r \in \OPLCeps$, which proves the result.

    By Lemma \ref{lem:prop}, there exists a proper loss $\lpr$ so that Equation \eqref{eq:ell-redux0} is equivalent to 
    \begin{align} 
        \label{eq:ell-redux2}
        \E_{(\x, \y) \sim \mD}[\lpr(\y, r(\x)) ] \leq  \E_{(\x, \y) \sim \mD}[\lpr(\y, q(\x))].
    \end{align}
    We will in fact show that Equation \eqref{eq:ell-redux2} holds even conditioned on $\x$:
    \begin{align} 
        \label{eq:lpr-redux}
        \E_{\y|\x}[\lpr(\y, r(\x))] = \lpr(p^*(\x), r(\x)) \leq \lpr(p^*(\x), q(\x)) = \E_{\y|\x}[\lpr(\y, q(\x))] .
    \end{align}
    
    Consider $\x$ such that $q(\x) \leq p^*(\x)$. The case $q(\x) \geq p^*(\x)$ is handled similarly. Since the inequality holds pointwise for every $x$, we will drop the $x$ and show that for $q \leq r \leq p^*$, with all values in $[0,1]$, 
    \begin{align}
    \label{eq:simple}
        \lpr(p^*,q) \geq \lpr(p^*,r).
    \end{align}
    
    By Definition \ref{def:prop}, there exists a concave function $h$ (depending on $\lpr$) so that Equation \eqref{eq:simple} is equivalent to  
    \begin{align}
    \label{eq:sg-inc}
        h(q) + (p^* - q) h'(q) \geq h(r) + (p^* - r)h'(r).         
    \end{align}
    Geometrically, this  says that the supergradient approximation to $h(p^*)$ decreases (and hence improves) as we move from $q$ towards $p^*$.  
    To prove Equation~\eqref{eq:sg-inc}, we note that  
    \begin{align*}
        h(q) + (p^* -q)h'(q) & = h(q) + (r - q)h'(q) + (p^* - r)h'(q)\\
                             & \geq h(r) + (p^* -r)h'(r),
    \end{align*}
where we use $h(q) + (r -q)h'(q) \geq h(r)$ by concavity, and $h'(q) \geq h'(r)$ since $q < r$ and concavity implies $h'(q) \geq h'(r)$ for the chosen superderivatives.
\end{proof}

An important consequence is that omnipredictors do not even need to be accurate in expectation. Note that accuracy in expectation is a very weak requirement which is implied by both calibration and multiaccuracy. This is even in the case of perfect omniprediction with $\epsilon = 0$.

\begin{restatable}{lemma}{ae}
    \label{cor:ae}
    Assume there exists $q \in \OPLC$ such that $\E[|p^*(\x) - q(\x)|] \geq \alpha$. Then there exists $r \in \OPLC$ such that $\EAE(r) \geq \alpha/2$. 
\end{restatable}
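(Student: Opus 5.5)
Split $\X$ into $A = \{x : p^*(x) \geq q(x)\}$ and $B = \{x : q(x) > p^*(x)\}$. Then decompose the $\ell_1$ distance as
\[
\E\bigl[|p^*(\x)-q(\x)|\bigr] = \E\bigl[(p^*(\x)-q(\x))\mathbf{1}[\x\in A]\bigr] + \E\bigl[(q(\x)-p^*(\x))\mathbf{1}[\x\in B]\bigr] =: a + b,
\]
with $a, b \geq 0$ and $a + b \geq \alpha$. At least one of $a \geq \alpha/2$ or $b \geq \alpha/2$ must hold. By symmetry, assume $a \geq \alpha/2$; the other case is identical with the roles of the two regions swapped.

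Define $r$ pointwise by $r(x) = q(x)$ for $x \in A$ and $r(x) = p^*(x)$ for $x \in B$. By construction $r(x) \in \{q(x), p^*(x)\}$ for every $x$, so $r \in \btw(p^*, q)$. Theorem~\ref{thm:between} with $\epsilon = 0$ then gives $r \in \OPLC$.

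It remains to compute the expected accuracy error. Using $\E[\y] = \E[p^*(\x)]$,
\[
\E[\y] - \E[r(\x)] = \E\bigl[(p^*(\x) - r(\x))\mathbf{1}[\x \in A]\bigr] + \E\bigl[(p^*(\x)-r(\x))\mathbf{1}[\x\in B]\bigr] = a + 0,
\]
so $\EAE(r) = a \geq \alpha/2$. In the case $b \geq \alpha/2$, we instead set $r = q$ on $B$ and $r = p^*$ on $A$, which gives $\E[r(\x)] - \E[\y] = b$.

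There is no real obstacle here. The only points to check are that $r$ is a legitimate predictor (its values lie in $[0,1]$ because both $q$ and $p^*$ do) and that it lies in $\btw(p^*, q)$, which is immediate. All the substantive work is already done by Theorem~\ref{thm:between}. The only idea needed is to keep the deviation on one side of $p^*$ so that the signed errors do not cancel.
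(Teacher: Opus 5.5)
Your proof is correct and matches the paper's argument: split $\E[|p^*(\x)-q(\x)|]$ according to the sign of $p^*-q$, keep $q$ on the side carrying at least $\alpha/2$ and set $r = p^*$ elsewhere, use Theorem~\ref{thm:between} to get $r \in \OPLC$, and note that the signed deviations no longer cancel. The only difference is cosmetic: the paper takes the region $\{q > p^*\}$ as its WLOG case, whereas you take $\{p^* \geq q\}$.
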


\begin{proof}
    Note that
    \begin{align}
        \E[|p^*(\x) - q(\x)|] &= \E[\ind{p^*(\x) \geq q(\x)}(p^*(\x) - q(\x))] + \E[\ind{q(\x) > p^*(\x)} (q(\x) - p^*(\x))] \label{eqn:absdecompose}.
    \end{align}
    Thus, at least one of the two terms on the RHS of Eq.~\eqref{eqn:absdecompose} is at least $\alpha/2$; without loss of generality, assume that $\E[\ind{q(\x) > p^*(\x)}(q(\x) - p^*(\x))] \geq \alpha/2$. 
    
    Let us define the predictor $r$ as 
    \begin{align} 
        r(x) = \begin{cases} p^*(x) \ \text{if} \ q(x) \leq p^*(x)\\
        q(x) \ \ \text{if} \ q(x) > p^*(x).
        \end{cases}
    \end{align}
    It follows that $r \in \btw(p^*, q)$, and so by Theorem~\ref{thm:between}, $r \in \btw(p^*, q) \subseteq \OPLC$.
    But we have
    \begin{align*}
        \EAE(r) = \abs{\E[r(\x) - \y]} = \abs{\E[r(\x) - p^*(\x)]}
                = \abs{\E[\ind{q(\x) > p^*(\x)}(q(\x) - p^*(\x))]} \geq \alpha/2.
    \end{align*}
\end{proof}

For simple classes, we can explicitly construct examples of a $p^*$ and an omnipredictor for which accuracy in expectation does not hold. We outline such a construction for the case of coordinate functions when the ground truth $p^*$ is the parity function. 

\begin{itemize}[leftmargin=1em, itemsep=0em]
    \item $\mD$ is the distribution where $\x$ is uniform on $\zo^k$ and $\y = \chi(\x)$ is the parity of the bits. 
    \item Let $\mL$ be all convex, Lipschitz losses and $\mC = \{x \mapsto x_i \mid i \in [k]\}$ be all coordinate functions. (We could alternatively consider parities, conjunctions, or disjunctions on strictly fewer than $k$ bits.)
    \item The constant predictor $q(x) =1/2$ is $(\mC, 0)$-multicalibrated. So it is an $(\mL, \mC, 0)$-omnipredictor by the main result of \cite{omni}.
    \item The predictor $r$ defined as $r(x) = 1/2$ when $\chi(x) = 0$ and $r(x) = 1$ when $\chi(x) =1$ is also an $(\mL, \mC, 0)$-omnipredictor since it lies between $q$ and $p^*$. But $\E[r(\x)] = 3/4$ whereas $\E[\y] =1/2$, so $\EAE(r) = 1/4$.  
\end{itemize}

The question then is whether there always exist omnipredictors $q$ for which $\E[|p^*(\x) - q(\x)|]$ is large. Under mild complexity-theoretic assumptions on the hypothesis class $\mC$ and the marginal distribution $\mD_\mX$, we show that such omnipredictors always exist. 
Our assumption asserts the existence of a 
Boolean function which is extremely hard for $\mC$ under the distribution $\mD_\X$. 

\begin{definition}\label{def:hard-on-avg}
    For a distribution $\mD_\X$ on $\X$, a hypothesis class $\mC$ and $\eps > 0$, we say that the Boolean function $f : \mX \rightarrow \{0, 1\}$ is $(1/2 - \eps)$-\emph{hard} for $\mC$ under $\mD_\X$ if for every $c \in \mC$,
\begin{equation}\label{eq:PRF-cond-C}
    \left| \E_{\mD_\X}\left[c(\x)\Big(f(\x)-\dfrac{1}{2}\Big)\right] \right| \leq \epsilon.
\end{equation}
\end{definition}

To see why this implies average-case hardness, note that for Boolean functions $c$, and a balanced function $f$, this definition implies that 
\[ \Pr_{\mD_\X}[c(x) = f(x)] \leq 1/2 + 2\eps.\]

Our result applies to the class $\mL_{c}$ of convex, Lipschitz loss functions. These are loss functions where $\mA = \R$, hence $\ell:\zo \times \R \to \R$, and  $\ell(y,t)$ is a convex, Lipschitz function of $t$ for $y \in \zo$. The main result of \cite{omni} shows that if $p$ is $\C$-multicalibrated, then it is an $(\mL_{c}, \mC)$-omnipredictor. We use this to show the following: 

\begin{restatable}{theorem}{eaelarge}\label{thm:eae_large}
    Let $\mC$ be a hypothesis class of functions with range $[-1, 1]$, $\mD_\X$ be a distribution over $\mX$, and let $\mL$ be the class of all convex, Lipschitz losses.\footnote{By \emph{convex}, we mean that both functions $\ell(0, \cdot)$ and $\ell(1, \cdot)$ are convex, for each $\ell \in \mL$. As in \cite{omni} we can relax this to include all $(B, \epsilon)$-nice functions and drop the global Lipschitz requirement.} Suppose that there exists a Boolean function $p^* : \mX \rightarrow \{0, 1\}$ and $\epsilon > 0$ such that $p^*$ is $(1/2 - \eps)$-hard for $\mC$ under $\mD_\X$.
Then, for the distribution $\mD = \mD(p^*)$, there exists a predictor $r : \mX \rightarrow [0, 1]$ such that $r$ is an $(\mL, \mC, O(\epsilon))$-omnipredictor and $\EAE(r) \geq \frac{1}{4}$. 
\end{restatable}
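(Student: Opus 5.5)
The plan follows the parity example: first show the constant predictor $q \equiv 1/2$ is an approximate omnipredictor, then move it toward $p^*$ on the points where $p^* = 1$.

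\textbf{Step 1: $q \equiv 1/2$ is approximately $\mC$-multicalibrated.} Since $q$ is constant, conditioning on $q(\x)$ does nothing. So its multicalibration error for each $c \in \mC$ equals
\[
\abs{\E\bigl[c(\x)(\y - 1/2)\bigr]} = \abs{\E\bigl[c(\x)(p^*(\x) - 1/2)\bigr]} \le \epsilon,
\]
by the $(1/2-\epsilon)$-hardness of $p^*$ (Definition~\ref{def:hard-on-avg}). Hence $q$ is $(\mC,\epsilon)$-multicalibrated.

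\textbf{Step 2: $q$ is an omnipredictor.} By the main theorem of \cite{omni}, $q$ is an $(\mL,\mC,O(\epsilon))$-omnipredictor for convex Lipschitz losses. The constant in $O(\cdot)$ depends on the Lipschitz constants and the range normalization, as in that theorem. Here we take the losses to be normalized, or more precisely the $(B,\epsilon)$-nice losses of \cite{omni}. A constant predictor has a single level set, so any discretization step in \cite{omni}'s argument costs nothing.

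\textbf{Step 3: move toward $p^*$ where $p^* = 1$.} Since $p^*$ is Boolean, $|p^*(\x) - q(\x)| = 1/2$ everywhere, so $\E|p^*-q| = 1/2$. Lemma~\ref{cor:ae} was stated only for $\OPLC$ ($\epsilon=0$), so I would not invoke it directly. Instead I use Theorem~\ref{thm:between}, which holds for $\OPLCeps$ with any $\epsilon$. Define
\[
r(x) = \begin{cases} 1 & \text{if } p^*(x)=1,\\ 1/2 & \text{if } p^*(x)=0.\end{cases}
\]
This is exactly the construction in Lemma~\ref{cor:ae}, with the case split chosen explicitly. Then $r \in \btw(p^*,q)$, so by Theorem~\ref{thm:between}, $r$ is an $(\mL,\mC,O(\epsilon))$-omnipredictor.

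\textbf{Step 4: bound $\EAE(r)$.} Let $\mu = \E[p^*(\x)]$. Then $\E[r(\x)] = \mu + (1-\mu)/2$, so $\EAE(r) = (1-\mu)/2$. This is at least $1/4$ only if $\mu \le 1/2$. If instead $\mu > 1/2$, use the symmetric choice: set $r = 0$ where $p^* = 0$ and $r = 1/2$ elsewhere, which gives $\EAE(r) = \mu/2 \ge 1/4$. Equivalently, in the decomposition of Lemma~\ref{cor:ae}, one of the two regions has mass at least $1/2$, and we pick that one. Either way $\EAE(r) \ge 1/4$.

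The main obstacle is Step 2: checking that \cite{omni}'s multicalibration-to-omniprediction implication gives error $O(\epsilon)$ with the paper's normalization, i.e.\ hypotheses in $[-1,1]$ and bounded Lipschitz losses. The remaining steps are direct.
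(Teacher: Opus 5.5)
Your proposal is correct and follows the paper's proof: $q \equiv 1/2$ is $(\mC,\epsilon)$-multicalibrated by the hardness of $p^*$, hence an $O(\epsilon)$-omnipredictor by \cite{omni}, and setting $r = p^*$ on one of the regions $\{p^*=0\}$, $\{p^*=1\}$ while keeping $q$ on the heavier one gives $\EAE(r) \ge 1/4$. The only difference is that you apply \cref{thm:between} for $\OPLCeps$ directly, with an explicit case split on $\E[p^*(\x)]$; this fixes the small mismatch that \cref{cor:ae} is stated only for $\epsilon = 0$, while the paper invokes it for an $O(\epsilon)$-omnipredictor.
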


\begin{proof}
    Consider the function $q(x) = 1/2$. For $\mD = \mD(p^*)$, it is straightforward to see that $q$ is $(\mC, \epsilon)$-multicalibrated. Since $p^*$ is $(1/2-\epsilon)$-hard for $\mC$ under $\mD_\X$, for each $c \in \mC$, 
    \[
        |\E[c(\x)(\y-q(\x))]| = \left| \E\left[c(\x)\Big(p^*(\x)-\dfrac{1}{2}\Big)\right] \right| \leq \epsilon.  
    \]
     Then the main result of \citep{omni} implies that $q$ is an $(\mL, \mC, O(\epsilon))$-omnipredictor where $\mL$ is the class of convex, Lipschitz losses; here the hidden constant in the $O$-notation depends only on the Lipschitz constant of the class of loss functions.

    Since the range of $p^*$ is $\{0, 1\}$,
    \[ \E\left[|q(\x) - p^*(\x)|\right] = \frac{1}{2}, \]
    and the conclusion follows by applying Lemma~\ref{cor:ae}.
\end{proof}

\begin{remark}\label{remark:proper}
    The result of Theorem~\ref{thm:eae_large} extends beyond the class $\mL_{c}$ of convex, Lipschitz loss functions. 
    The recent work by \cite{OKK25} shows that if $q$ is $(\mathrm{Th}(\mC), \epsilon)$-multicalibrated, then it is an $(\mL_{\textrm{BV}}, \C, O(\epsilon))$-omnipredictor for the class $\mL_{\textrm{BV}}$ of bounded variation losses, which include all convex, all Lipschitz, and all
    proper losses, where $\mathrm{Th}(\C)$ denotes the concept class consisting of thresholds applied to $\C$.
    Applying this result to the proof of Theorem~\ref{thm:eae_large}, we similarly obtain a predictor $r$ that is an $(\mL_{\textrm{BV}}, \mathrm{Th}(\mC), O(\epsilon))$-omnipredictor and $\EAE(r) \geq \frac{1}{4}$. 
\end{remark}

Another consequence of this construction is that neither calibration nor multiaccuracy is  necessary for strong agnostic learning (although together they are sufficient \citep{lossOI, CGKR26}). The proof of the result below follows directly from the above construction by considering the $\ell_1$ loss.
This result can be made more general as in the case of \cref{thm:eae_large}. 

\begin{corollary}
    Under the assumptions of \cref{thm:eae_large}, there exists a predictor $p: \X \to [0,1]$ and a post-processing function $\kappa: [0,1] \rightarrow [-1, 1]$ such that
    \[
        \E[(2\y-1)\kappa(p(\x))] \geq \max_{c \in \C} \E[(2\y-1)c(\x)] - O(\epsilon),
    \] 
    but $\EAE(p) \geq 1/4$. 
\end{corollary}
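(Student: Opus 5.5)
The plan is to take the $\ell_1$-type loss $\ell(y,t) = -(2y-1)t$ with action space $\mA=[-1,1]$. This loss is linear, hence convex, and it is $1$-Lipschitz in $t$, so it belongs to the class $\mL$ of Theorem~\ref{thm:eae_large}. Minimizing $\E[\ell(\y,c(\x))]$ over $c\in\C$ is then exactly maximizing the correlation $\E[(2\y-1)c(\x)]$. The predictor will be the $r$ produced by Theorem~\ref{thm:eae_large}, and the post-processing will be $\kappa = k_\ell$, its optimal action function.

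First I compute $k_\ell$. For $v\in[0,1]$ we have $\ell(v,t) = -(2v-1)t$, which is minimized over $t\in[-1,1]$ at $t=\mathrm{sign}(2v-1)$. Ties at $v=1/2$ may be broken arbitrarily, for instance by taking $t=0$. So $\kappa(v) = \mathrm{sign}(2v-1)$ maps $[0,1]\to[-1,1]$.

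Next I apply the omniprediction guarantee of Theorem~\ref{thm:eae_large} to this particular $\ell$. It gives
\[
\E[\ell(\y,\kappa(r(\x)))] \le \min_{c\in\C}\E[\ell(\y,c(\x))] + O(\epsilon).
\]
Negating both sides turns this into
\[
\E[(2\y-1)\kappa(r(\x))] \ge \max_{c\in\C}\E[(2\y-1)c(\x)] - O(\epsilon).
\]
The constant in $O(\cdot)$ is absolute, because the Lipschitz constant of $\ell$ is $1$. Theorem~\ref{thm:eae_large} also gives $\EAE(r)\ge 1/4$, so setting $p=r$ finishes the argument.

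The only point that needs care is whether the definition of omniprediction in Theorem~\ref{thm:eae_large} uses the action space $\mA=\R$ while hypotheses take values in $[-1,1]$. With $\mA=\R$, a linear loss has no minimizer. To handle this I would restrict to actions in $[-1,1]$, or equivalently clip $\ell$ to $\ell(y,t)=-(2y-1)\max(-1,\min(1,t))$. The clipped loss is still $1$-Lipschitz but no longer convex. Alternatively, I would invoke the result of \cite{omni} directly with $\mA=[-1,1]$, since that result holds for the relevant nice losses on bounded action sets. One could also sidestep the issue entirely: the predictor $q\equiv 1/2$ is $(\C,\epsilon)$-multiaccurate, so the correlation of every $c\in\C$ is at most $2\epsilon$, and it then suffices to check directly that $\E[(2\y-1)\kappa(r(\x))]\ge 0$. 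This holds for the explicit $r$ of Lemma~\ref{cor:ae}, because wherever $r\neq 1/2$ we have $r=p^*$, so $\kappa$ predicts the label exactly there. I expect this bookkeeping about action spaces to be the only real obstacle.
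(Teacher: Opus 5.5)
Your proposal is correct and follows the paper's route: apply Theorem~\ref{thm:eae_large} to an $\ell_1$-type loss and take $\kappa=k_\ell$, a threshold at $1/2$ (any tie-breaking at $v=1/2$ is allowed). The action-space issue you flag goes away if, as the paper does, you use $\ell(y,t)=|(2y-1)-t|$ on $\mA=\R$, which is convex and $1$-Lipschitz, has its minimizers in $[-1,1]$, and equals $1-(2y-1)t$ on $[-1,1]$; your fallback argument is also valid and avoids \cite{omni} entirely, since hardness bounds the correlation of every $c\in\C$ by $2\epsilon$, while $\kappa\circ r$ with $\kappa(1/2)=0$ has correlation $\Pr[r(\x)\neq 1/2]\ge 0$.
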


For the $\ell_1$ loss, the optimal post-processing function corresponds to $\kappa(v) = 2 \cdot \mathbbm{1}[v \geq 1/2]-1$.

\paragraph{On the existence of average-case hard functions.}
    For what classes $\mC$ and distributions $\mD_\X$ can we expect there to be a Boolean function $f$ that satisfies Definition~\ref{def:hard-on-avg}?
    Clearly we need the $\mD_\X$ distribution to have sufficient min-entropy: else even a constant function will give a sufficiently good approximation to $f$. So let us assume this to be the case. We will consider the standard complexity theory setting where $\X = \zo^n$ and $\mD_\X$ is the uniform distribution; analogous statements hold for other domains and (high-entropy) distributions, but are tedious to formalize. 
    
    We now focus on the class $\mC$. 
    The answer depends on whether we want the omnipredictor to be efficiently computable or not. The definition of an omnipredictor itself does not require it to be efficiently computable. Yet, as noted in \cite{omni}, the definition is most interesting when the predictor is not only efficiently computable but also learnable; this typically means that it lies in a more powerful class than $\mC$. Hence, we will consider both settings: the information-theoretic setting where the omnipredictor is not required to be computationally efficient, and the setting where it is. 

    Consider the information-theoretic setting where we do not require $p^*$ or $p$ to be efficiently computable. As long as functions in the class $\mC$ have bounded description lengths, or small covering numbers (in the continuous setting), a simple counting argument implies that there exists a $p^*$ which is average-case hard for $\mC$.  This gives an omnipredictor $p$ satisfying the guarantees of Theorem \ref{thm:eae_large}. 

    The more challenging setting is where we require a Boolean $p^*$ that is hard for $\mC$ under $\mD_\X$, but is efficiently computable by a more powerful class. This requires making unproven but plausible computational assumptions about $\mC$. Similar assumptions are used in the seminal work of Nisan and Wigderson \cite{nisan1994hardness} to construct pseudorandom generators that can be used to fool statistical tests modeled by functions from $\mC$. Such assumptions are stronger than worst-case hardness assumptions, but weaker than the average-case hardness assumptions on which cryptography is based; see Impagliazzo’s well-known Five Worlds survey \cite{Impagliazzo95}.

\section{Loss OI and Calibrated Multiaccuracy}
\label{sec:loss-oi}

In this section, we present our result characterizing loss OI \citep{lossOI} as equivalent in power to calibrated multiaccuracy.
Loss OI is an instantiation of the general notion of sample-access \emph{outcome indistinguishability} (OI) from \cite{OI}, which concerns the ability of a prespecified class of \emph{distinguisher} functions to tell apart genuine data points of the form $(\x, \y, \tp(\x))$ from modeled ones of the form $(\x, \ty, \tp(\x))$, where $\ty|\x \sim \Ber(\tp(\x))$. It was shown in \cite{lossOI} that loss OI, while stronger than omniprediction, is nevertheless implied by calibrated multiaccuracy. The present result can be viewed as a converse to that prior result, fully characterizing loss OI as \emph{equivalent} to calibrated multiaccuracy.
This is in contrast to our result in the previous section, which showed that the weaker notion of omniprediction does not even require accuracy in expectation. We start with the definition of loss outcome indistinguishability:

\begin{definition}[Loss Outcome Indistinguishability \citep{lossOI}]
For $\ell \in \mL$ and $c \in \mC$, we define the distinguisher $u_{\ell, c}: \X \times  \zo \times [0,1]$ as $u_{\ell, c}(x, y, v) =  \ell(y, c(x)) - \ell(y, k_\ell(v)).$
We say the predictor $\tp:\X \to [0,1]$ is $(\mL, \mC, \alpha)$-\emph{loss OI} if for every $\ell \in \mL$ and $c \in \mC$,
\begin{align}
    \abs{\E\bigl[u_{\ell, c}(\x, \y, \tp(\x))\bigr] - \E\bigl[u_{\ell, c}(\x, \ty, \tp(\x))\bigr]} \leq \alpha.
\end{align}

\end{definition}

To gain intuition for the distinguisher $u_{\ell, c}$ and the notion of loss OI, we make three key observations. First, we note that the test $u_{\ell, c}$ compares---for each labeled instance $(x, y)$ and prediction value $v$---the difference in loss suffered by two actions:
the action $c(x)$ suggested by the hypothesis $c$ and the action $k_\ell(v)$, which is the optimal action when $\y \sim \Ber(v)$. 
Next, we note $u_{\ell, c}$ is always nonnegative in expectation when fed prediction values $v$ generated by the predictor $p$ and simulated labels $\y_p|\x\sim\Ber(\tp(\x))$. To see this, observe that by the definition of $k_\ell$,
\[ \E\bigl[u_{\ell, c}(\x, \y_\tp, \tp(\x))\bigr] = \E \bigl[\ell(\y_\tp, c(\x)) - \ell(\y_p, k_\ell(\tp(\x)))\bigr] \geq 0.\]
This is because $\y_p|\x \sim \Ber(p(\x))$, so $k_\ell \circ p(\x)$ incurs the lowest loss among all possible actions. Finally, we observe that if a predictor $p$ satisfies (perfect) loss OI, then it too incurs less loss than $c$, because we can switch $\y_p$ and $\y$ to get
\begin{align}
\label{eq:lossOI-op}
    \E_{\mD}\bigl[\ell(\y, c(\x))\bigr] - \E_{\mD}\bigl[\ell(\y, k_\ell(\tp(\x)))\bigr] =  \E\bigl[u_{\ell, c}(\x, \y, \tp(\x))\bigr] = \E\bigl[u_{\ell, c}(\x, \ty, \tp(\x))\bigr] \geq 0.
\end{align}
This is precisely the condition required for omniprediction.

Having established the relationship between loss OI and omniprediction, we now recall the two simpler notions of \emph{decision OI} and \emph{hypothesis OI}, also introduced in \cite{lossOI}, which together are sufficient for loss OI.

\begin{definition}[Decision OI]
    The predictor $\tp:\X \to [0,1]$ satisfies $(\mL, \beta)$-\emph{decision OI} if for every $\ell \in \mL$ it holds that
\begin{align}
    \abs{\E\big[\ell(\y, k_\ell(\tp(\x)))\big] - \E\big[\ell(\ty, k_\ell(\tp(\x)))\big]} \leq \beta.
\end{align}
\end{definition}

\begin{definition}[Hypothesis OI]
    The predictor $\tp:\X \to [0,1]$ satisfies $(\mL, \mC, \gamma)$-\emph{hypothesis OI} if for every $\ell \in \mL$ and every $c \in \mC$, it holds that
\begin{align}
    \abs{\E\big[\ell(\y, c(\x))\big] - \E\big[\ell(\ty, c(\x))\big]} \leq \gamma. 
\end{align}
\end{definition}

They prove a decomposition lemma showing that decision OI and hypothesis OI together imply loss OI:

\begin{lemma}[Lemma 4.4 in \cite{lossOI}]
\label{thm:loss-oi-prior}
    If a predictor satisfies $(\mL, \beta)$-decision OI and $(\mL, \mC, \gamma)$-hypothesis OI, then it satisfies $(\mL, \mC, \beta + \gamma)$-loss OI. 
\end{lemma}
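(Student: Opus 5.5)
The plan is to prove the statement directly from the definitions, using linearity of expectation and the triangle inequality. Fix $\ell \in \mL$ and $c \in \mC$, and abbreviate the four expectations
\[
A = \E[\ell(\y, c(\x))], \quad A' = \E[\ell(\ty, c(\x))], \quad B = \E[\ell(\y, k_\ell(\tp(\x)))], \quad B' = \E[\ell(\ty, k_\ell(\tp(\x)))].
\]
In this notation, decision OI bounds $|B - B'|$ and hypothesis OI bounds $|A - A'|$. Loss OI bounds $|(A - B) - (A' - B')|$, because $\E[u_{\ell,c}(\x,\y,\tp(\x))] = A - B$ and $\E[u_{\ell,c}(\x,\ty,\tp(\x))] = A' - B'$.

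The key step is to rewrite the target in terms of the two quantities we control:
\[
(A - B) - (A' - B') = (A - A') - (B - B').
\]
The triangle inequality then gives
\[
\abs{(A-B) - (A'-B')} \le |A - A'| + |B - B'| \le \gamma + \beta.
\]
Since $\ell$ and $c$ were arbitrary, this yields $(\mL, \mC, \beta+\gamma)$-loss OI.

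The only point needing care is that these expectations are finite and may be split as above. The distinguisher $u_{\ell,c}$ is by definition the difference of two loss terms, so linearity applies term by term. Both the real and simulated expectations are taken over the same marginal $\x \sim \mD_\X$; they differ only in whether the label is $\y$ or $\ty$. I do not expect a genuine obstacle, since the argument is a two-line algebraic regrouping. The main thing to check is that no conditioning on $\tp(\x)$ or cross-term between the label and prediction obstructs the split, and none does because every term is a plain expectation of a function of $(\x, \text{label})$.
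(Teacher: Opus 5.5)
Your argument is correct: writing the loss OI gap as $(A-A')-(B-B')$ and applying the triangle inequality is the standard proof from \cite{lossOI}, which this paper cites without reproducing. It uses the same decomposition $\E[\ell(\y, c(\x))] = \E[u_{\ell,c}(\x,\y,\tp(\x))] + \E[\ell(\y, k_\ell(\tp(\x)))]$ that the paper applies in the reverse direction to prove Lemma~\ref{thm:loss-oi-key-lemma}.
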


In the same paper, they show that decision OI is a form of weight-restricted calibration (as defined in \cite{GopalanKSZ22}), and hypothesis OI is multiaccuracy for a class of hypotheses derived from $\mL$ and $\mC$. To state these prior results, let $\partial \ell(t) = \ell(1, t) - \ell(0, t)$ denote the discrete partial derivative of $\ell$ with respect to the label.

\begin{lemma}[Characterizing decision and hypothesis OI \cite{lossOI}]
\label{thm:decision-oi}\;
\begin{itemize}
\item  A predictor $p$ satisfies $(\ell,\beta)$-decision OI if and only if $p$ satisfies weight-restricted calibration with error at most $\beta$ with respect to the weight function $\partial \ell \circ k_\ell$.
\item A predictor $p$ satisfies $(\ell, \mC, \gamma)$-hypothesis OI if and only if $p$ satisfies multiaccuracy with error at most $\gamma$ with respect to the function class $\partial \ell \circ \mC$.
\end{itemize}
\end{lemma}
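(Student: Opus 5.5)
Both equivalences follow from one identity: for binary labels the loss is affine in the label. For any action $t \in \mA$ and $y \in \zo$ we can write
\[
    \ell(y, t) = \ell(0, t) + y\,\bigl(\ell(1,t) - \ell(0,t)\bigr) = \ell(0,t) + y\,\partial \ell(t).
\]
The plan is to apply this to a generic action rule $a(\x)$ that depends on $\x$ only. This covers both $a(\x) = k_\ell(\tp(\x))$ and $a(\x) = c(\x)$. The claim is
\[
    \E\bigl[\ell(\y, a(\x))\bigr] - \E\bigl[\ell(\ty, a(\x))\bigr] = \E\bigl[\partial\ell(a(\x))\,(\y - \tp(\x))\bigr].
\]
To see this, subtract the two affine expressions. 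The $\ell(0, a(\x))$ terms cancel, since $\x$ has the same marginal $\mD_\X$ under $\mD$ and under $\mD(\tp)$. What remains is $\E[\partial\ell(a(\x))\,\y] - \E[\partial\ell(a(\x))\,\ty]$. In the second term we condition on $\x$ and use $\E[\ty \mid \x] = \tp(\x)$; this is legitimate because $a(\x)$ is a function of $\x$ alone.

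For the first bullet, take $a(\x) = k_\ell(\tp(\x))$. The left-hand side of the identity is exactly the quantity bounded in the definition of $(\ell,\beta)$-decision OI. The right-hand side is $\E[w(\tp(\x))(\y - \tp(\x))]$ with weight function $w = \partial\ell \circ k_\ell$. Hence the two absolute values coincide, and one is at most $\beta$ if and only if the other is. This is the weight-restricted calibration condition for the single weight $\partial \ell \circ k_\ell$.

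For the second bullet, take $a(\x) = c(\x)$ for each $c \in \mC$. Again the two expressions are equal, now with hypothesis $\partial\ell \circ c$. Quantifying over all $c \in \mC$ gives that $(\ell,\mC,\gamma)$-hypothesis OI holds if and only if $(\partial\ell\circ\mC, \gamma)$-multiaccuracy holds.

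There is no real obstacle here; the only point of care is bookkeeping. The functions $\partial\ell\circ k_\ell$ and $\partial \ell \circ c$ have range in $[-1,1]$ when $\ell$ takes values in $[0,1]$, as noted after Theorem~\ref{thm:proper-loss}. For unbounded losses they still make sense as weights or hypotheses if we read the error definitions verbatim without the range restriction. Finally, the tie-breaking choice in $k_\ell$ is fixed once, so the same function $k_\ell$ appears on both sides of the identity.
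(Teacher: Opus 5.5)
Your proof is correct. Writing $\ell(y,t)=\ell(0,t)+y\,\partial\ell(t)$ and using $\E[\ty\mid\x]=\tp(\x)$ turns both OI gaps exactly into $\E[\partial\ell(a(\x))(\y-\tp(\x))]$, with $a=k_\ell\circ\tp$ for decision OI and $a=c$ for hypothesis OI; this is the standard argument behind the result of \cite{lossOI}, which the paper cites without reproducing a proof (the same affine-in-$y$ decomposition appears in its proof of Lemma~\ref{thm:shift-neutral}).
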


The main result of this section is essentially a converse to \Cref{thm:loss-oi-prior}. We show that under a very mild condition on the loss function $\ell$ and class $\mC$, any predictor satisfying $(\ell, \mC, \alpha)$-loss OI must \emph{also} satisfy both $(\ell, O(\alpha))$-decision OI and $(\ell, \mC, O(\alpha))$-hypothesis OI. Combined with \cref{thm:decision-oi}, which says that decision OI is a form of calibration and hypothesis OI is a form of multiaccuracy, this establishes the desired equivalence between loss OI and calibrated multiaccuracy. We prove our equivalence for loss functions $\ell$ that have a \emph{neutral} action:

\begin{definition}[Neutrality]
    We say that an action $t \in \mA$ is \emph{neutral} for a loss function $\ell : \{0, 1\} \times \mA \to \R$ if $\partial \ell(t) = 0$.
\end{definition}
Many natural loss functions have neutral actions. For example, $t = 1/2$ is a neutral action for the squared loss and cross-entropy loss, and $t = 0$ is neutral for the logistic loss. Although there do exist loss functions without neutral actions, such losses can always be easily tweaked to correct this anomaly. For example, the loss function $\ell(y, p) = y$ is proper, but the ranges of $\ell(1, p)$ and $\ell(0, p)$ are disjoint, so there cannot exist a neutral action. In this example, the additive dependence on $y$ was the cause of the problem. The following lemma shows that the presence of such a term is the only possible obstruction to having a neutral action, and that this obstruction can be bypassed by subtracting away the term:

\begin{restatable}{lemma}{shiftneutral}
\label{thm:shift-neutral}
    Fix a loss function $\hat{\ell} : \{0, 1\} \times \mA \to \R$ and an action $t_0 \in \mA$. Then $t_0$ is neutral for the loss function $\ell(y, t) = \hat{\ell}(y, t) - \partial \hat{\ell}(t_0)y.$ Moreover, if $\mA = [0, 1]$ and $\hat{\ell}$ is proper, then so is $\ell$. 
\end{restatable}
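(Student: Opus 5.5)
The statement has two parts, and each is a direct computation from the definitions. The first part is neutrality of $t_0$ for the shifted loss $\ell(y,t) = \hat\ell(y,t) - \partial\hat\ell(t_0)\,y$. I will compute the discrete label-derivative of $\ell$ at an arbitrary action $t$:
\[
\partial \ell(t) = \ell(1,t) - \ell(0,t) = \bigl(\hat\ell(1,t) - \partial\hat\ell(t_0)\bigr) - \hat\ell(0,t) = \partial\hat\ell(t) - \partial\hat\ell(t_0).
\]
Setting $t = t_0$ gives $\partial\ell(t_0) = 0$. So $t_0$ is neutral for $\ell$ in the sense of the Neutrality definition.

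For the second part, assume $\mA = [0,1]$ and $\hat\ell$ is proper. The natural route is to use the linear extension $\ell(p,t) = p\,\ell(1,t) + (1-p)\,\ell(0,t)$ from the preliminaries. Because the subtracted term $\partial\hat\ell(t_0)\,y$ is linear in $y$, its extension is $\partial\hat\ell(t_0)\,p$. This gives
\[
\ell(p,q) = \hat\ell(p,q) - \partial\hat\ell(t_0)\,p \quad \text{for all } p,q \in [0,1].
\]
The key observation is that the correction term depends only on $p$ and not on the action $q$. I will then fix $p$ and compare actions:
\[
\ell(p,q) - \ell(p,p) = \hat\ell(p,q) - \hat\ell(p,p) \geq 0,
\]
where the inequality is properness of $\hat\ell$. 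This is exactly properness of $\ell$.

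An equivalent check is that $\ell$ and $\hat\ell$ have the same optimal action function $k_\ell = k_{\hat\ell}$, since the two losses differ by a function of the label only. Alternatively, in the Bregman form of \Cref{thm:proper-loss}, the shift replaces $h$ by $h(p) - \partial\hat\ell(t_0)\,p$, which is still concave. Either observation confirms that properness is preserved.

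No step here is a real obstacle; everything follows immediately from the definitions. The one point to handle carefully is that the linear extension of the correction term $y \mapsto \partial\hat\ell(t_0)\,y$ is $p \mapsto \partial\hat\ell(t_0)\,p$. This holds because the term is affine in $y$, and it is what guarantees the shift does not interact with the action $q$.
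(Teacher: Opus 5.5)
Your proof is correct. The neutrality step is the same computation as the paper's. The paper writes $\ell(y,t)=\hat\ell(0,t)+y\bigl(\partial\hat\ell(t)-\partial\hat\ell(t_0)\bigr)$, which is your identity $\partial\ell(t)=\partial\hat\ell(t)-\partial\hat\ell(t_0)$ in another form.

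For properness, the paper takes the route you mention only as an alternative. It uses the characterization of \Cref{thm:proper-loss} to write $\hat\ell(y,p)=h(p)+h'(p)(y-p)$ with $h$ concave. It then observes that $h(p)-\partial\hat\ell(t_0)p$ is concave with superderivative $h'(p)-\partial\hat\ell(t_0)$, and checks that the resulting Bregman-form loss is exactly $\ell$.

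Your primary argument instead works directly from the definition. The linear extension of the correction term is $\partial\hat\ell(t_0)p$, which does not depend on the action, so $\ell(p,q)-\ell(p,p)=\hat\ell(p,q)-\hat\ell(p,p)\ge 0$. This route has two advantages:
\begin{itemize}
\item It is more elementary, since it needs neither direction of the Bregman characterization.
\item The same line shows, for an arbitrary action space $\mA$, that the argmin sets are unchanged. Hence $k_\ell=k_{\hat\ell}$, which is the precise sense in which the shift ``essentially preserves'' the loss's behavior.
\end{itemize}
The paper's route instead gives you the concave generator of the new loss explicitly. In particular it shows $\partial\ell=h'-\partial\hat\ell(t_0)$, so neutrality of $t_0$ becomes a zero of the shifted superderivative.
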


\begin{proof}
    Observe that
    \begin{align*} 
        \ell(y, t) = \hat{\ell}(y, t) - \partial \hat{\ell}(t_0)y  = \hat{\ell}(0,t) + y(\partial \hat{\ell}(t) - \partial \hat{\ell}(t_0)).
    \end{align*}
    Hence $\ell(y,t_0) = \hat{\ell}(0,t_0)$ and $t_0$ is a neutral action for $\ell$. 
    
    Next, recall that $\hat{\ell}$ is proper if and only if there exists a concave function $h$ and a superderivative $h'$ of $h$ such that $\hat{\ell}(y, p) = h(p) + h'(p)(y-p)$.
    If $h(p)$ is concave, then so is the function $h(p) + c \cdot p$ for any constant $c$, of which $h'(p) + c$ is a valid superderivative. Thus, the following loss function is also proper:
    \[
        \bigl(h(p) + c \cdot p\bigr) + \bigl(h'(p) + c\bigr)(y-p).
    \]
    When $c = -\partial\hat{\ell}(t_0)$, the expression above exactly coincides with the function $\ell$ of interest.
\end{proof}

The last mild assumption that we need concerns the class $\mC$, not the loss $\ell$. We require $\mC$ to contain all the {\em constant} action functions $c(x) = t$ for each fixed action $t \in \mA$. We denote this by $\mA \subseteq \mC$. With this notation in hand, we are ready to state the main result of this section:

\begin{restatable}{theorem}{lossoicalma}
\label{thm:loss-oi-cal-ma}
    Suppose the loss function $\ell : \{0, 1\} \times \mA \to \R$ has a neutral action and $\mA \subseteq \mC$. Then, a predictor $p$ satisfies $(\ell, \mC, O(\alpha))$-loss OI if and only if it satisfies both $(\partial \ell \circ k_\ell, O(\alpha))$-weight restricted calibration and $(\partial \ell \circ \mC, O(\alpha))$-multiaccuracy.
\end{restatable}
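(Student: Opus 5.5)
The ``if'' direction is immediate from what is already stated. By \Cref{thm:decision-oi}, $(\partial \ell \circ k_\ell, O(\alpha))$-weight restricted calibration is the same as $(\ell, O(\alpha))$-decision OI, and $(\partial \ell \circ \mC, O(\alpha))$-multiaccuracy is the same as $(\ell, \mC, O(\alpha))$-hypothesis OI. \Cref{thm:loss-oi-prior} then gives $(\ell, \mC, O(\alpha))$-loss OI. The substance is the converse. By \Cref{thm:decision-oi} again, it suffices to show that $(\ell, \mC, \alpha)$-loss OI implies $(\ell, \alpha)$-decision OI and $(\ell, \mC, 2\alpha)$-hypothesis OI.

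The key identity I will use is linearity in the label. For any fixed action $a$, $\ell(y, a) = \ell(0, a) + y\,\partial \ell(a)$. Hence for any function $a(\x)$ we have
\[
\E[\ell(\y, a(\x))] - \E[\ell(\ty, a(\x))] = \E\bigl[\partial \ell(a(\x))\,(\y - \tp(\x))\bigr],
\]
using $\E[\ty \mid \x] = \tp(\x)$. Now let $t$ be a neutral action. Then $\partial \ell(t) = 0$, so $\E[\ell(\y, t)] = \E[\ell(\ty, t)]$.

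\textbf{Decision OI.} Since $\mA \subseteq \mC$, the constant hypothesis $c \equiv t$ lies in $\mC$. Loss OI for $u_{\ell, t}$ gives
\[
\abs{\bigl(\E[\ell(\y,t)] - \E[\ell(\y, k_\ell(\tp(\x)))]\bigr) - \bigl(\E[\ell(\ty,t)] - \E[\ell(\ty, k_\ell(\tp(\x)))]\bigr)} \le \alpha .
\]
The two $t$-terms cancel by neutrality. What remains is exactly
\[
\abs{\E[\ell(\y, k_\ell(\tp(\x)))] - \E[\ell(\ty, k_\ell(\tp(\x)))]} \le \alpha,
\]
which is $(\ell,\alpha)$-decision OI.

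\textbf{Hypothesis OI.} For arbitrary $c \in \mC$, split the hypothesis OI gap as
\[
\E[\ell(\y, c(\x))] - \E[\ell(\ty, c(\x))] = \Bigl(\E[u_{\ell,c}(\x,\y,\tp(\x))] - \E[u_{\ell,c}(\x,\ty,\tp(\x))]\Bigr) + \Bigl(\E[\ell(\y, k_\ell(\tp(\x)))] - \E[\ell(\ty, k_\ell(\tp(\x)))]\Bigr).
\]
The first bracket is at most $\alpha$ in absolute value by loss OI for $c$. The second is at most $\alpha$ by the decision OI just derived. The triangle inequality gives the bound $2\alpha$. Translating through \Cref{thm:decision-oi} yields weight-restricted calibration for $\partial \ell \circ k_\ell$ with error $\alpha$ and multiaccuracy for $\partial \ell \circ \mC$ with error $2\alpha$.

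\textbf{Main obstacle.} There is essentially no analytic difficulty. The only point needing care is the neutral action: it must belong to $\mC$ as a constant hypothesis, which is guaranteed by the assumption $\mA \subseteq \mC$. One must also check that the expectations over $\ty$ are well defined, so that the conditional-expectation identity above is valid. The only loss in the constants is the factor of $2$, which the $O(\alpha)$ in the statement absorbs.
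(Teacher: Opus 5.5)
Your proposal is correct and matches the paper's argument. The ``if'' direction is \Cref{thm:loss-oi-prior} combined with \Cref{thm:decision-oi}. The converse is exactly \Cref{thm:loss-oi-key-lemma}: applying loss OI to the constant neutral hypothesis $t \in \mC$ makes the $t$-terms cancel, which gives decision OI with error $\alpha$. Splitting $\ell(y, c(x)) = u_{\ell,c}(x,y,v) + \ell(y, k_\ell(v))$ and using the triangle inequality then gives hypothesis OI with error $2\alpha$, as in the paper.
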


Once again, we emphasize that the neutral action assumption is a mild one, and is essentially only necessary to rule out the trivial loss function $\ell(y,p)=y$ that does not depend on the prediction $p$, as well as rescalings and small perturbations thereof (i.e. functions where $\partial \ell$ is close to a constant function). We could, for example, replace the neutral action assumption with the similarly mild requirements that $\lvert \mathrm{range}(\partial \ell)\rvert \ge \Omega(1)$ and $\mathrm{range}(\ell) \subseteq [-1, 1]$, which also rule out these bad cases.

\Cref{thm:loss-oi-cal-ma} extends the previously known \Cref{thm:loss-oi-prior}, which was a one-directional implication. Bearing in mind the connections between decision OI and calibration and between hypothesis OI and multiaccuracy (\Cref{thm:decision-oi}), we see that the new direction of \Cref{thm:loss-oi-cal-ma} follows immediately from the following lemma regarding losses with neutral actions.

\begin{restatable}{lemma}{lossoikey}
\label{thm:loss-oi-key-lemma}
    Assume that the loss function $\ell : \{0, 1\} \times \mA \to \R$ has a neutral action and that $\mA \subseteq \mC$. If a predictor $\tp$ satisfies $(\ell, \mC, \alpha)$-loss OI, then it also satisfies both $(\ell, \alpha)$-decision OI and $(\ell, \mC, 2\alpha)$-hypothesis OI.
\end{restatable}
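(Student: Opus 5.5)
The plan is to apply the loss OI guarantee to the constant hypothesis given by the neutral action, read off decision OI, and then get hypothesis OI for general $c$ by a triangle inequality. Throughout, abbreviate
\[
\Delta(g) = \E\bigl[g(\x,\y)\bigr] - \E\bigl[g(\x,\ty)\bigr]
\]
for any test $g$. By linearity of expectation,
\[
\Delta(u_{\ell,c}) = \Delta\bigl(\ell(\cdot, c(\cdot))\bigr) - \Delta\bigl(\ell(\cdot, k_\ell(\tp(\cdot)))\bigr).
\]
The first term is exactly the quantity bounded in hypothesis OI for $c$. The second term is exactly the quantity bounded in decision OI. Loss OI says $|\Delta(u_{\ell,c})| \le \alpha$ for every $c \in \mC$.

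\textbf{Step 1 (decision OI).} Let $t \in \mA$ be a neutral action, so $\partial\ell(t)=0$ and $\ell(1,t)=\ell(0,t)$. Then $\ell(y,t)$ does not depend on $y$, so $\Delta(\ell(\cdot,t)) = 0$: both expectations equal the constant $\ell(0,t)$. Since $\mA \subseteq \mC$, the constant function $c \equiv t$ lies in $\mC$, and loss OI applies to it. Plugging into the decomposition gives
\[
\bigl|\Delta\bigl(\ell(\cdot, k_\ell(\tp(\cdot)))\bigr)\bigr| = |\Delta(u_{\ell,t})| \le \alpha,
\]
which is exactly $(\ell,\alpha)$-decision OI.

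\textbf{Step 2 (hypothesis OI).} For arbitrary $c \in \mC$, rearrange the decomposition and apply the triangle inequality:
\[
\bigl|\Delta\bigl(\ell(\cdot,c(\cdot))\bigr)\bigr| \le |\Delta(u_{\ell,c})| + \bigl|\Delta\bigl(\ell(\cdot,k_\ell(\tp(\cdot)))\bigr)\bigr| \le \alpha + \alpha = 2\alpha.
\]
This is $(\ell,\mC,2\alpha)$-hypothesis OI.

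There is no serious obstacle. The only point needing care is in Step 1: one must check that the neutral action, viewed as a constant hypothesis, is actually in $\mC$, which is exactly the assumption $\mA \subseteq \mC$. One must also check that its loss term cancels exactly because $\partial\ell(t)=0$, not merely approximately. Everything else is linearity of expectation.
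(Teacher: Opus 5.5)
Your proposal is correct and follows the paper's proof essentially step for step. You apply loss OI to the constant hypothesis given by the neutral action $t \in \mA \subseteq \mC$, use $\ell(0,t)=\ell(1,t)$ to cancel that term and obtain $(\ell,\alpha)$-decision OI, and then use the decomposition $\E[\ell(\y,c(\x))] = \E[u_{\ell,c}] + \E[\ell(\y,k_\ell(\tp(\x)))]$ with the triangle inequality to get $(\ell,\mC,2\alpha)$-hypothesis OI.
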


\begin{proof}
Let us consider a neutral action $t$ for $\ell$. Since $t\in \mC$, the loss OI condition on $p$ implies 
    \begin{align}
    \label{eq:oi-t}
        \abs{\E[u_{\ell, t}(\x, \y, \tp(\x))] - \E[u_{\ell, t}(\x, \ty, \tp(\x))]} \leq \alpha.
    \end{align}
    But since $t$ is neutral, the distribution on labels is irrelevant. Hence we have
    \[ \E[\ell(\y, t)] = \E[\ell(\ty, t)] = \ell(0,t) = \ell(1,t).\] 
    Plugging in the definition of $u_{\ell, t}$, namely $ \ell(y, t) - \ell(y, k_\ell(v))$ and using the fact that $\E[\ell(\y, t)] = \E[\ell(\ty, t)]$, the left side of Equation~\eqref{eq:oi-t} equals
    \begin{align}
    & \abs{\E[\ell(\y, k_\ell(\tp(\x))) - \ell(\y, t)]  - \E[\ell(\ty, k_\ell(\tp(\x))) - \ell(\ty, t)]}\notag\\
    & = \abs{\E[\ell(\y, k_\ell(\tp(\x)))] - \E[\ell(\ty, k_\ell(\tp(\x)))]},
    \end{align}
    which implies the claim about decision OI.  For the claim about hypothesis OI, observe that
    \begin{align*}
        \E[\ell(\y, c(\x))] &= \E[u_{\ell, c}(\x, \y, \tp(\x))] + \E[\ell(\y, k_\ell(\tp(\x)))]\\
        \E[\ell(\ty, c(\x))] &= \E[u_{\ell, c}(\x, \y_p, \tp(\x))] + \E[\ell(\ty, k_\ell(\tp(\x)))].
    \end{align*}
    For every $c \in \C$, each expectation on the right side changes only by $\alpha$ in going from $\y$ to $\ty$, by loss OI applied to $u_{\ell, c}$ (for the term $\E[u_{\ell, c}(\x, \y, \tp(\x))]$) and by decision OI applied to $\ell$ (for the term $\E[\ell(\y, k_\ell(\tp(\x)))]$), respectively. Therefore, by the triangle inequality, the expectations on the right side can change by at most $2\alpha$, and so $\abs{\E[\ell(\y, c(\x))] - \E[\ell(\y_p, c(\x))]} \leq 2\alpha$ for every $c \in \C$. 
    This proves the claim about hypothesis OI.
\end{proof}

\paragraph{AI usage.} No AI models were used, either in proving the results or in preparing this manuscript. 

\newpage 

\bibliographystyle{alpha}
\bibliography{refs}

\end{document}